\documentclass[sigconf,nonacm]{acmart}
\setcopyright{none}
\renewcommand\footnotetextcopyrightpermission[1]{}

\usepackage{booktabs}
\usepackage{array}
\usepackage{amsmath}
\graphicspath{{figures/}}
\newtheorem{thm}{Theorem}
\newtheorem{prop}{Proposition}
\newtheorem{cor}{Corollary}

\title{Smart Alone, Blind Together: Why Accuracy Fails to Certify Theory of Mind in Multi-Agent Teams}

\begin{document}
\author{Will Beaumaster}
\affiliation{\institution{University of Minnesota}\city{Minneapolis}\state{MN}\country{USA}}
\email{beaum045@umn.edu}
\author{Paul Schrater}
\affiliation{\institution{University of Minnesota}\city{Minneapolis}\state{MN}\country{USA}}
\email{schrater@umn.edu}

\begin{abstract}
Coordinators that act on what they infer about teammates are usually certified by the
accuracy of those inferences. We argue that accuracy is the wrong certificate, for two
reasons. First, the accuracy needed to act depends on what a wrong action costs, so the
bar belongs to the deployment, not the module. Second, a module's errors are rarely
independent, and dependence has a sign: positively associated errors help where their
damage overlaps and hurt where it compounds, so an accuracy threshold fails in
a predictable direction, and once wrong beliefs interact and can do harm, no rule based
on accuracy is right for every module. We test these predictions in task assignment, Level-Based
Foraging and Hanabi, with injected errors, trained and Bayesian modules and five language
models, and in human judgments. Accuracy fails in both directions,
approving harmful language-model deployments in Hanabi, where shared errors compound, and
rejecting beneficial ones in task assignment, where costly errors are rare. In our tasks it
also buys far less redundancy than independence predicts: from GPT-5.4-mini to Claude
Opus~5.5, error on teammates the models have evidence about falls from 32\% to 11\%, but the
chance that all three observers misread one falls only from 16\% to 9\% (independence: 3.4\%
to 0.15\%). We propose certifying the deployment, with a Blind-Spot Gate that prices error
costs, pairwise interactions and measured dependence without deploying the module. On held-out data from the cells it
was developed on, it makes no significant false approval in 20 splits (accuracy: 3.2 per split), and in
preregistered tests on modules it was not developed on, it recovers what accuracy rejects
(seven classifiers: regret 0.004 against 0.479) and catches what accuracy approves (a
Bayesian module: significant false approvals in 17 of 20 splits for accuracy, 1 for
the gate).
\end{abstract}

\keywords{theory of mind; coordination; correlated errors; deployment certification; language-model agents; Hanabi}

\maketitle

\section{Introduction}
A coordinator that models its teammates acts on what it infers: it assigns a
worker to the task it believes the worker wants, gives a Hanabi hint in the
convention it believes the receiver uses, or lets a language model's reading of a
partner decide what happens next. A wrong inference does not merely fail to
help. It moves the system somewhere a passive, belief-free policy would not have
gone, so below some accuracy, acting on theory of mind (ToM) is worse than not
acting at all. ToM modules, language models among them, are evaluated mostly by the
accuracy of their inferences \cite{riemer2025position}, and deployment decisions
follow: a module is trusted if its accuracy clears a threshold.

We argue that accuracy is the wrong certificate, for two reasons. The first is cost.
For one belief, the break-even accuracy is a cost-sensitive threshold \cite{elkan2001}:
it is set by what a wrong action costs in the deployment, not by how hard the inference
is. In three domains the threshold slides with that cost, and in two it stays put for
errors that cannot reach the costly channel, the events through which a wrong action
does harm.

The second reason is dependence, the main subject of the paper. A module infers many
beliefs, and its errors are rarely independent: one ambiguous observation misleads it
about several teammates, and a belief inferred in public is wrong for every observer at
once. Dependence has a sign. Where two errors do less harm together than apart (their
damage overlaps), positively associated errors raise the expected benefit of acting;
where they do more (damage compounds), they lower it (Theorem~\ref{thm:corr};
Figure~\ref{fig:schematic}). An accuracy threshold calibrated on independent errors
therefore fails in a predictable direction: too strict when errors overlap in what they
break, too lenient when a shared misreading removes the redundancy that lets one
teammate cover another (Corollary~\ref{cor:threshold}). And once wrong beliefs interact
and acting on them can hurt, no rule based on accuracy is right for every module; a
certificate must see how errors co-occur (Theorem~\ref{thm:order}).

Real modules make these errors. Trained classifiers in task assignment rarely make
the costly kind, and accuracy rejects deployments that help in 33 of 35 combinations of
module and harm. A Bayesian module in Hanabi shares its mistakes across observers, and
accuracy approves deployments that hurt. Language models do both, and show a further
regularity: in our tasks, accuracy buys far less redundancy than independence
predicts, because accurate models' remaining errors are largely shared across copies
(Proposition~\ref{prop:neff}): three observers of a teammate are worth
\(n_{\mathrm{eff}}\le3\) independent ones, and \(n_{\mathrm{eff}}\) approaches 1 as a model's
remaining errors become blind spots that every copy shares. The models are accurate, but
wrong together. People show a
milder version of the same pattern. We therefore propose certifying the deployment: the
Blind-Spot Gate prices what errors cost, how they interact and how often they co-occur,
from replays with injected errors and the module's error statistics, without deploying the
module.

\paragraph{Contributions.} (1) A cost-based account of when acting on ToM pays, with
preregistered tests in three domains (\S\ref{sec:model}, \S\ref{sec:cost}). (2) A theory
of certification under correlated errors: the sign condition of Theorem~\ref{thm:corr}
and Corollary~\ref{cor:threshold}, which tells which way accuracy fails, and
Theorem~\ref{thm:order}, which says that no accuracy-based rule is right for every
module once wrong beliefs interact and acting on them can hurt. (3) Evidence from classifiers, a Bayesian module and five language models (four Claude,
one GPT) that accuracy fails in both directions and buys far less redundancy than
independence predicts, the latter also in human readers (\S\ref{sec:real}--\S\ref{sec:human}). (4) The Blind-Spot Gate: as safe as an A/B test with a 95\% bound on
held-out data of the modules it was developed on, better than measuring the deployment
directly when games of the module are scarce (Hanabi), and lower in regret than accuracy on
classifiers and a Bayesian module it was not developed on (preregistered; \S\ref{sec:gate}). The individual tools are known: cost-sensitive
thresholds, association and the supermodular order \cite{esary1967,christofides2004},
M\"obius expansions. The contribution is the certification problem, the sign condition,
and the gate. Table~\ref{tab:map} lists what each claim rests on.

\begin{table}[t]
\centering
\footnotesize
\caption{What each claim rests on. Held: a prediction fixed before the test came out as
predicted; descriptive: a measurement reported without a prediction; exploratory: an
analysis chosen after seeing the data. IDs refer to \texttt{code\_data/SCORECARD.md} in the
supplement, which scores all 73 preregistered predictions of the tests cited here: 48 held,
3 held in part, 21 failed and 1 was unscored.}
\label{tab:map}
\setlength{\tabcolsep}{3pt}
\begin{tabular}{@{}>{\raggedright\arraybackslash}p{0.60\columnwidth}>{\raggedright\arraybackslash}p{0.36\columnwidth}@{}}
\toprule
claim (evidence) & status \\
\midrule
The break-even accuracy rises with the cost of a wrong action (task assignment, LBF, Hanabi) & held (LBF1--2, HA2) \\
Errors that cannot reach the costly channel leave it flat (task assignment, LBF) & held (P1--3; LBF4's under-estimate half) \\
Where damage compounds, shared errors cost benefit (Hanabi: injected errors, Bayesian module, Claude clones) & held (HC2, HD2, CB5, EF4) \\
\ldots\ and accuracy approves harmful deployments (Claude teams, Bayesian module, GPT-5.4-mini) & held (M7, CB7); module, GPT exploratory \\
With one life, where Hanabi is also a series unit, shared errors help (injected errors) & not held (HC1, G2): the two effects cancel \\
Where costly errors are rare, accuracy rejects good deployments (classifiers; Haiku~4.5, Sonnet~5.5) & held (E4); classifiers descriptive \\
Copies of one model fail together far beyond independence (Claude models, relabelled; GPT-5.4-mini) & held (M1, CB4); GPT exploratory \\
Accuracy buys far less redundancy than independence predicts (five models; three human datasets) & held (CB3, EF1); GPT and human data exploratory \\
Pricing costs, interactions and dependence certifies better (language models; out of sample, classifiers and a Bayesian module) & exploratory (57 language-model cells); held (OOS1--2, OOSH2--3); OOSH1 not held (one split) \\
Deferring on unread teammates never loses (Hanabi, 30 fresh-seed cells) & held (G1) \\
\bottomrule
\end{tabular}
\end{table}

\section{Related work}
\paragraph{Machine theory of mind.} Inferring others' hidden mental states from
behaviour and acting on the inferences is the setting of machine ToM
\cite{rabinowitz2018machine,baker2017rational}, now often with language models in
multi-agent games \cite{cross2025hypothetical}. That work mostly asks whether
inferred beliefs are accurate or whether a ToM module helps. The closest statement
of our thesis is that literal ToM accuracy and functional value come apart for
language models \cite{riemer2025position}; we show that accuracy misorders modules
by functional value in both directions, and predict which. Mismatched ToM depth
hurts LLM coordination \cite{mu2026adaptive}, and homogeneous LLM teams fall into
a symmetry trap that theory of mind cannot break, because every agent forms the same
prediction of the others
\cite{yuan2026scene}, an identical-agent case for which Theorem~\ref{thm:corr}
gives the sign condition. Public-belief methods in Hanabi share one inference among all
players by design \cite{foerster2019bayesian}, the case of our public teams.

\paragraph{When modelling others pays.} The value of ToM depends on the setting:
higher-order ToM helps most in unpredictable negotiations
\cite{deweerd2022unpredictable}, cooperation can favour less sophisticated ToM
\cite{devaine2014evolution}, and an inaccurate opponent model can be worse than
none \cite{goodman2020opponent}. In type-based ad hoc teamwork, acting on hypothesised
teammate types can succeed even when the hypotheses are wrong \cite{albrecht2016belief}.
Safe exploitation bounds the loss from acting on a
prediction and conjectures that predictor faults matter less in cooperative games
\cite{damer2017safely}, which our thresholds contradict when a wrong action is costly.
Whether to model other agents
at all, and judging models by the decisions they produce, not their
accuracy, are open problems \cite{albrecht2018modelling}; the
threshold answers the first quantitatively, and Theorem~\ref{thm:order} shows why
the second is needed. In Hanabi, ad hoc agents keep a Bayesian belief over which known
strategy a partner plays \cite{canaan2023adhoc}, as our module does over conventions.

\paragraph{Correlated errors and monoculture.} Language models err together:
models that are both wrong often give the same wrong answer, more so for more
accurate models \cite{kim2025correlated,goel2025great}, and aggregating samples
amplifies shared misconceptions
\cite{denisovblanch2026consensus}. How much agreement exceeds chance depends on the null
model, because items differ in difficulty \cite{jo2026subjectivity}.
Algorithmic monoculture lowers collective decision quality even when the shared
algorithm is individually better \cite{kleinberg2021algorithmic}. Ensemble learning relates
classifier diversity to ensemble accuracy \cite{kuncheva2003measures}, and with correlated
votes more voters add less than the Condorcet jury theorem promises \cite{ladha1992condorcet}.
Independently developed software versions fail together because inputs differ in
difficulty \cite{eckhardt1985theoretical}, and forced diversity changes how often
\cite{littlewood1989conceptual}; Proposition~\ref{prop:neff} carries these coincident-failure
models to observers of a teammate.
These works measure agreement or its cost in selection and voting; we ask what agreement does to a team
that acts on its beliefs, find that the sign depends on whether damage overlaps or
compounds, and turn the answer into a deployment test, building on reliability theory:
association \cite{esary1967} and the supermodular order \cite{christofides2004}.

\section{The threshold and what moves it}
\label{sec:model}
\paragraph{Setup.} A coordinator acts for a team whose \(W\) members have hidden
types (rules, levels, conventions), holding a belief about each. We measure
performance on a normalized scale, \(B=(\text{score}-\text{do-nothing})/
(\text{informed}-\text{do-nothing})\), where do-nothing is the passive, belief-free
policy (\(B=0\)) and informed is the same coordinator with true beliefs (\(B=1\)), assumed to score higher.
If acting on a correct belief contributes \(1/W\), acting on a confidently wrong
one contributes \(\ell(h)/W\), with \(h\) the cost of a wrong action, and
contributions add, then
\begin{equation}
B(a,h)=a+(1-a)\,\ell(h),\qquad a^{*}(h)=\frac{|\ell(h)|}{1+|\ell(h)|}\ \text{ if }\ell(h)<0 .
\label{eq:meanfield}
\end{equation}
For one belief this is the cost-sensitive threshold of \cite{elkan2001}. Its
knee, the accuracy \(a^{*}\) below which acting hurts, exists only if acting on a wrong belief is worse than not acting (\(\ell<0\)), which requires a passive equilibrium that a wrong
action can damage, and the threshold rises with \(|\ell|\). Two results need
no additive form.

\begin{prop}[Cost and channels]
\label{prop:mono}
Let \(a^{*}(h)=\sup\{a:B(a,h)<0\}\). (a) If raising harm never turns a harmful
coordinator into a beneficial one (\(B(a,h)<0\Rightarrow B(a,h')<0\) for \(h'>h\)),
then \(a^{*}(h)\) is non-decreasing in \(h\). This holds automatically when harm is
a pure cost: trajectories do not depend on \(h\) and \(R_h=R_0-h\,c\), with
\(c\ge0\) charged only on a set \(E\) of events that neither do-nothing nor the
informed policy triggers, so that \(B(a,h)=B(a,0)-h\,\kappa(a)\), \(\kappa\ge0\).
(b) If the coordinator's policy does not depend on \(h\), harm acts only through \(E\)
(trajectories and rewards agree for all \(h\) on every history without an event in
\(E\)), and neither reference policy triggers \(E\), a class of belief errors that never triggers an event in \(E\) has a
threshold that does not depend on \(h\).
\end{prop}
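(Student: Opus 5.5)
Part (a) is a set inclusion. Write \(S(h)=\{a:B(a,h)<0\}\), so that \(a^{*}(h)=\sup S(h)\). The hypothesis \(B(a,h)<0\Rightarrow B(a,h')<0\) for \(h'>h\) says exactly that \(S(h)\subseteq S(h')\). The supremum is monotone under inclusion, so \(a^{*}(h)\le a^{*}(h')\). If \(S(h)\) is empty, the same argument goes through with the convention \(\sup\emptyset=-\infty\) (or \(0\)).

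For the pure-cost case, I will use the normalization \(B=(\text{score}-\text{do-nothing})/(\text{informed}-\text{do-nothing})\). Because trajectories do not depend on \(h\) and neither reference policy triggers \(E\), both reference scores equal their \(h=0\) values. Hence the denominator \(D\) is independent of \(h\), and it is positive by the standing assumption. For the coordinator at accuracy \(a\), the expected score is \(\mathbb{E}[R_0]-h\,\mathbb{E}[c]\). Dividing by \(D\) gives \(B(a,h)=B(a,0)-h\,\kappa(a)\) with \(\kappa(a)=\mathbb{E}[c]/D\ge0\), since \(c\ge0\). Then \(B(a,h')=B(a,h)-(h'-h)\kappa(a)\le B(a,h)\), so the hypothesis of (a) holds.

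Part (b) compares \(B(a,h)\) with \(B(a,0)\) for a fixed error class. Under the stated conditions, each reference policy's trajectory contains no event in \(E\), so its score agrees for all \(h\). The denominator is therefore again \(h\)-independent, and so is the do-nothing offset. Consider a coordinator whose errors lie in the class, acting through a policy that does not depend on \(h\). By hypothesis, every realized history avoids \(E\), so its trajectory and reward coincide with those at \(h=0\). Taking expectations gives \(B(a,h)=B(a,0)\) for every \(a\) in the class. Hence \(\{a:B(a,h)<0\}\) does not depend on \(h\), and neither does its supremum.

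The main obstacle is the measure-theoretic bookkeeping in (b). The statement "histories without an event in \(E\) agree for all \(h\)" must be applied to the \(h\)-dependent process, not only to the \(h=0\) process. I will handle this by coupling the randomness (seeds, belief-error draws) across \(h\). The policy is \(h\)-independent, and the dynamics agree off \(E\). So by induction on time step, the histories at \(h\) and at \(0\) coincide up to the first \(E\)-event. Since the error class never triggers such an event, they coincide entirely. This induction is the one point where the "never triggers" hypothesis has to be read as a statement about the \(h\)-process; I would state explicitly that it then holds for every \(h\).
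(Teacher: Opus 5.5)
Your proof is correct and follows the paper's own argument. For (a) you use set inclusion and monotonicity of the supremum; for the pure-cost case, an \(h\)-invariant normalizer with \(\kappa(a)\) equal to the expected charge divided by it; and for (b), identical trajectories and rewards at every \(h\). Your coupling induction only spells out what the paper asserts in one line, and it in fact shows that a class which never triggers \(E\) at \(h=0\) never triggers it at any \(h\), since the coupled histories agree until the first event in \(E\), which therefore occurs at the same step under every \(h\).
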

\begin{proof}
(a) The hypothesis makes \(\{a:B(a,h)<0\}\) grow with \(h\), so its supremum cannot
fall; under a pure cost the normalizer is \(h\)-invariant and \(\kappa(a)\) is the
expected charge over it. (b) Such errors produce the same trajectories and rewards
at every \(h\), and neither reference policy depends on \(h\).
\end{proof}
LBF's penalty is a pure cost. In task assignment (ejection) and Hanabi (lives)
harm feeds back into the dynamics, and (a)'s hypothesis is an empirical fact: in
our runs the curves for different harms cross only where all of them are negative,
so a coordinator harmful at one harm level is harmful at every higher one.

\section{Correlated errors}
\label{sec:corr}
Fix the wrong value each belief would take, let \(\varepsilon\in\{0,1\}^P\) record
which of \(P\) beliefs are wrong, and write \(\bar B(\varepsilon)\) for the
expected benefit given \(\varepsilon\). Call \(\bar B\) \emph{supermodular} when an
extra wrong belief costs less if others are already wrong (damage overlaps) and
\emph{submodular} when it costs more (damage compounds). A random vector is
\emph{positively associated} when every two non-decreasing functions of it have
non-negative covariance \cite{esary1967}, and it dominates another vector in the
\emph{supermodular order} when every supermodular function has at least as large an
expectation under it \cite{christofides2004}.

\begin{thm}[Correlated errors]
\label{thm:corr}
Let \(\varepsilon\) be positively associated \cite{esary1967} and let
\(\varepsilon^{\perp}\) have independent components with the same marginals. If
\(\bar B\) is supermodular, \(\mathbb{E}\,\bar B(\varepsilon)\ge\mathbb{E}\,\bar
B(\varepsilon^{\perp})\); if \(\bar B\) is submodular, the inequality reverses.
\end{thm}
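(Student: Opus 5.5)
The plan is to reduce Theorem~\ref{thm:corr} to the known fact that a positively associated binary vector dominates its independent version in the supermodular order \cite{christofides2004}. I would nonetheless give a self-contained argument on \(\{0,1\}^P\) by induction on \(P\), using only the definition of association. The submodular case then follows by applying the supermodular case to \(-\bar B\), since \(\bar B\) submodular means \(-\bar B\) is supermodular. So I only treat the supermodular case.

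First, I would fix what supermodular means on the lattice \(\{0,1\}^P\): for each coordinate \(i\), the increment \(\Delta_i\bar B(\varepsilon)=\bar B(\varepsilon^{i\to1})-\bar B(\varepsilon^{i\to0})\) is non-decreasing in the other coordinates. This is exactly the paper's verbal definition, that an extra wrong belief costs less (its increment is larger) when others are already wrong.

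Second, I would interpolate between the two vectors. Let \(\eta^{(k)}\) take \((\varepsilon_1,\dots,\varepsilon_k)\) jointly from the law of \(\varepsilon\), and let the remaining coordinates be independent draws with the same marginals, independent of everything else. Then \(\eta^{(P)}\) has the law of \(\varepsilon\) and \(\eta^{(1)}\) has the law of \(\varepsilon^{\perp}\), so it suffices to show \(\mathbb{E}\bar B(\eta^{(k)})\ge\mathbb{E}\bar B(\eta^{(k-1)})\) for each \(k\). Marginals of an associated vector are associated, so \((\varepsilon_1,\dots,\varepsilon_k)\) is associated.

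The step from \(k-1\) to \(k\) differs only in whether \(\varepsilon_k\) is coupled to the first \(k-1\) coordinates. I would first condition on the independent tail coordinates \(t\). Then, writing \(x=(\varepsilon_1,\dots,\varepsilon_{k-1})\), I would decompose
\[
\bar B(x,\varepsilon_k,t)=\bar B(x,0,t)+\varepsilon_k\,\Delta_k\bar B(x,t).
\]
The first term has the same expectation in both cases, because \(x\) has the same law in both. The second term is the product of two non-decreasing functions of the associated vector \((x,\varepsilon_k)\): the function \(\varepsilon_k\), and the function \(\Delta_k\bar B(x,t)\), which is non-decreasing in \(x\) by supermodularity. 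Association therefore gives
\[
\mathbb{E}[\varepsilon_k\Delta_k\bar B]\ge\mathbb{E}[\varepsilon_k]\,\mathbb{E}[\Delta_k\bar B].
\]
The right-hand side is exactly the value under independence of \(\varepsilon_k\) from \(x\). Integrating over \(t\) then completes the step.

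The main obstacle I expect is making sure that association survives conditioning on the tail. In my setup this is harmless, because the tail is independent of \((x,\varepsilon_k)\). The subtler check is that \(\Delta_k\bar B\) is monotone in \(x\) for each fixed \(t\). This requires supermodularity in the lattice sense across all coordinates, not just pairwise at a single point. So I would state the definition as increasing differences, which on a product of chains is equivalent to the lattice inequality \(\bar B(u\vee v)+\bar B(u\wedge v)\ge\bar B(u)+\bar B(v)\), and make clear that this is the notion the theorem uses.
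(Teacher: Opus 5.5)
Your proof is correct. Its outer shell is the paper's: the paper's proof is a one-line appeal to \cite{christofides2004} for the fact that an associated vector dominates its independent version in the supermodular order, followed by the same passage to \(-\bar B\) for the submodular case.

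The difference is that you prove that domination rather than cite it. You use a hybrid argument that couples one coordinate at a time. Each swap leaves the law of \((x,t)\) unchanged. Because \(\varepsilon_k\in\{0,1\}\), the split \(\bar B(x,\varepsilon_k,t)=\bar B(x,0,t)+\varepsilon_k\Delta_k\bar B(x,t)\) is exact, so each step reduces to a single covariance \(\mathrm{Cov}(\varepsilon_k,\Delta_k\bar B(x,t))\ge0\).

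The citation buys brevity and generality: it covers associated vectors with arbitrary real-valued coordinates. Your route leans on the binary coordinates and buys self-containment. It also shows exactly what is used: only covariances between one coordinate and a non-decreasing function of the coordinates before it. Your argument therefore proves the inequality under that weaker sequential condition, not only under full association.

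Making the tail independent is also what lets you condition on it safely, since association is not in general preserved under conditioning. Your increasing-differences definition is the one the paper itself uses (nonnegative mixed differences, in the proof of Proposition~\ref{prop:units}).
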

\begin{proof}
By \cite{christofides2004}, an associated vector dominates its independent version in
the supermodular order; apply the order to \(\bar B\), and in the submodular case to
\(-\bar B\), which is then supermodular.
\end{proof}

Errors driven by shared latent causes, such as one ambiguous observation or one
public belief per teammate held by every observer, are associated when each error
is non-decreasing in independent causes \cite{esary1967}. Which case applies can
often be read off the task.

\begin{prop}[Overlap and compounding from task structure]
\label{prop:units}
Let the benefit be a sum over units (cards, tasks, teammates), unit \(u\) losing value
with the number of wrong beliefs among those it depends on, \(C_u\):
\(\bar B(\varepsilon)=b-\sum_u w_u\,\phi_u\bigl(\sum_{p\in C_u}\varepsilon_p\bigr)\),
\(w_u\ge0\), \(\phi_u\) non-decreasing. If every \(\phi_u\) is concave, \(\bar B\) is
supermodular; if every \(\phi_u\) is convex, \(\bar B\) is submodular. In particular a
unit lost when \emph{any} of its beliefs is wrong (\(\phi(k)=\min(k,1)\)) makes damage
overlap, and a unit lost only when \emph{all} are wrong (\(\phi(k)=\mathbf 1[k=|C_u|]\),
redundancy) makes it compound: the series and parallel systems of reliability
theory \cite{esary1967}.
\end{prop}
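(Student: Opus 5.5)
The plan is to reduce supermodularity of \(\bar B\) on \(\{0,1\}^P\) to a pairwise condition and check it unit by unit. A function on the Boolean lattice is supermodular if and only if, for every \(\varepsilon\) and every pair of distinct coordinates \(p\neq q\) with \(\varepsilon_p=\varepsilon_q=0\), the second difference
\[
\Delta_{pq}\bar B(\varepsilon)=\bar B(\varepsilon+e_p+e_q)-\bar B(\varepsilon+e_p)-\bar B(\varepsilon+e_q)+\bar B(\varepsilon)
\]
is non-negative. Submodularity is the same statement with \(\le 0\). This standard local characterization follows by telescoping the lattice inequality \(f(x\vee y)+f(x\wedge y)\ge f(x)+f(y)\) one coordinate at a time. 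I will state it and take it as known.

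Both supermodularity and submodularity are preserved under non-negative linear combinations, and adding the constant \(b\) changes nothing. So it suffices to show that each single term \(g_u(\varepsilon)=-\phi_u(\sum_{p\in C_u}\varepsilon_p)\) is supermodular when \(\phi_u\) is concave and submodular when \(\phi_u\) is convex.

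Fix \(u\), write \(k=\sum_{p\in C_u}\varepsilon_p\), and split on membership of \(p,q\) in \(C_u\).
\begin{itemize}
\item If neither \(p\) nor \(q\) lies in \(C_u\), then \(g_u\) does not move and \(\Delta_{pq}g_u=0\).
\item If exactly one of them lies in \(C_u\), then \(g_u\) depends on only that coordinate among the two, so the second difference again vanishes.
\item If both lie in \(C_u\), then
\[
\Delta_{pq}g_u=-\bigl[\phi_u(k+2)-2\phi_u(k+1)+\phi_u(k)\bigr].
\]
This is non-negative exactly when \(\phi_u\) has a non-positive discrete second difference, i.e.\ is concave on \(\{0,\dots,|C_u|\}\). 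It is non-positive when \(\phi_u\) is convex.
\end{itemize}
Summing over \(u\) with weights \(w_u\ge0\) gives both claims. Monotonicity of \(\phi_u\) is not needed for the sign. It only ensures that each extra wrong belief costs something, which matches the reading of ``damage''.

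For the special cases, treat \(\phi\) as a function on the integers \(0,\dots,n\) with \(n=|C_u|\). Concavity or convexity is required only there, so the discrete second differences are what must be checked.
\begin{itemize}
\item For \(\phi(k)=\min(k,1)\), the increments are \(1,0,0,\dots\). They are non-increasing, so \(\phi\) is concave and damage overlaps. This is the series system: the unit fails if any belief is wrong.
\item For \(\phi(k)=\mathbf 1[k=n]\), the increments are \(0,\dots,0,1\). They are non-decreasing, so \(\phi\) is convex and damage compounds. This is the parallel system, where redundancy fails only when all beliefs are wrong.
\end{itemize}
When \(n=1\) both functions are linear and the unit is modular, consistent with both cases.

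I expect the only real obstacle to be stating the local characterization of supermodularity cleanly for the Boolean lattice, since the rest is bookkeeping. I would include a two-line telescoping justification rather than a citation. I would also note explicitly that a mixture of concave and convex \(\phi_u\) falls outside the proposition.
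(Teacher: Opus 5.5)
Your proposal is correct and follows the paper's proof essentially step for step. Like the paper, you reduce supermodularity on \(\{0,1\}^P\) to the sign of all mixed second differences, compute each unit's term as \(\phi_u(k+2)-2\phi_u(k+1)+\phi_u(k)\) when \(p,q\in C_u\) and \(0\) otherwise, and use that nonnegative sums preserve the sign while the minus sign reverses it; your explicit increment checks for \(\min(k,1)\) and \(\mathbf 1[k=|C_u|]\) spell out what the paper leaves implicit.
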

\begin{proof}
In two coordinates \(p\ne q\) the mixed difference of
\(\phi_u(\sum_{C_u}\varepsilon)\) is \(\phi_u(k{+}2)-2\phi_u(k{+}1)+\phi_u(k)\), with \(k\)
the other wrong beliefs in \(C_u\), if \(p,q\in C_u\), and \(0\) otherwise: \(\le0\) for
concave and \(\ge0\) for convex \(\phi_u\). Supermodularity on \(\{0,1\}^P\) is
nonnegativity of all mixed differences, which nonnegative sums preserve and the minus
sign reverses.
\end{proof}

Two beliefs each wrong with probability 0.2 show the size of the effect
(Figure~\ref{fig:schematic}). A unit lost
only when both are wrong (redundancy) is lost 4\% of the time if the errors are
independent and 20\% if they are fully shared; a unit lost when either is wrong, 36\%
and 20\%: the same accuracy and dependence move the benefit in opposite directions.

\begin{figure}[t]
\centering
\includegraphics[width=\columnwidth]{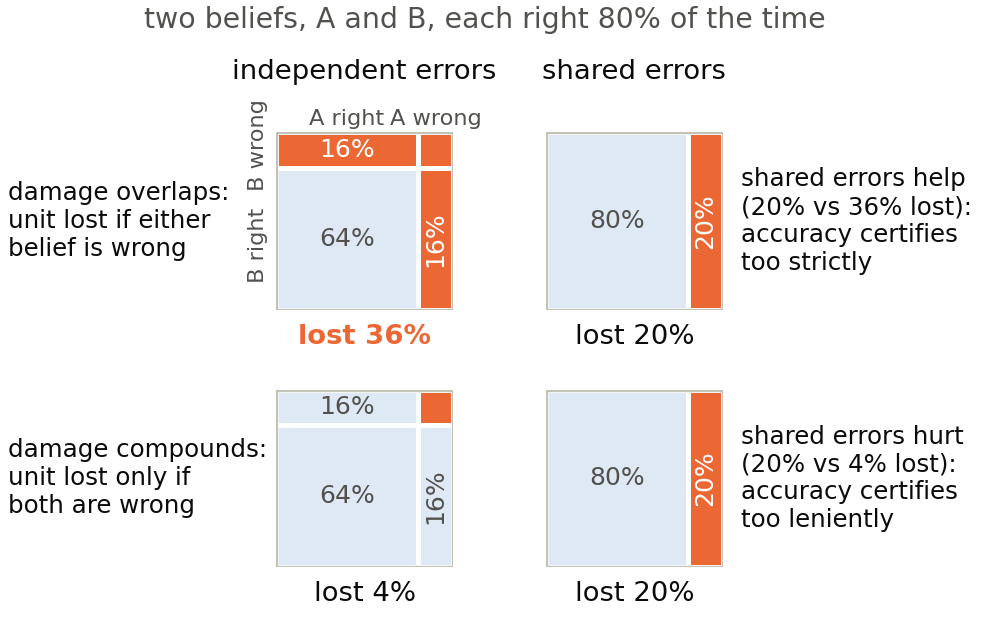}
\caption{Why dependence helps in one task and hurts in another. Two beliefs, each right
80\% of the time; each square shows the joint outcomes, orange where the unit is lost.
Shared errors move probability out of the one-wrong cells, half to both-wrong and half to
neither-wrong. A unit lost when either
belief is wrong (damage overlaps) is then lost less often, 20\% against 36\%; a unit lost
only when both are wrong (damage compounds) more often, 20\% against 4\%. Accuracy is
80\% in all four squares and sees no difference.}
\label{fig:schematic}
\end{figure}

Hanabi, read this way, has both kinds of unit: a teammate's critical cards are
mostly lost when no observer reads the teammate correctly (redundant), and with one
life a misreading can end the game outright (a series unit over all beliefs). If the
redundant units dominate, as they should with two or more lives, the proposition predicts
compounding, which \S\ref{sec:real} finds. With one life the two kinds pull in opposite
directions; we had preregistered a net gain from sharing there, but the two effects
cancel (Table~\ref{tab:map}). Coherence (wrong beliefs sharing a wrong value) is outside
the theorem; its gain in task assignment belongs to blocks of look-alike
workers, not pairs (supplement).

\begin{cor}[Certification under associated errors]
\label{cor:threshold}
Fix the per-belief accuracy \(a\); let \(B^{\perp}(a)\) and \(B^{\mathrm{as}}(a)\)
be the expected benefit under independent errors and under a positively associated
error law with those marginals (one for each \(a\)), with thresholds \(a^{*}_{\perp}\) and \(a^{*}_{\mathrm{as}}\).
If \(\bar B\) is supermodular, \(a^{*}_{\mathrm{as}}\le a^{*}_{\perp}\); if
submodular, \(a^{*}_{\mathrm{as}}\ge a^{*}_{\perp}\). A rule that deploys whenever
\(a>a^{*}_{\perp}\) therefore rejects deployments that do not hurt at
\(a\in(a^{*}_{\mathrm{as}},a^{*}_{\perp}]\) when damage overlaps, and, when
damage compounds and \(B^{\mathrm{as}}\) is increasing in \(a\), approves harmful
ones at \(a\in(a^{*}_{\perp},a^{*}_{\mathrm{as}})\).
\end{cor}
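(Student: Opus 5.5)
The plan is to apply Theorem~\ref{thm:corr} pointwise in \(a\), then pass from the pointwise comparison of benefit curves to a comparison of the thresholds defined as in Proposition~\ref{prop:mono}, \(a^{*}=\sup\{a:B(a)<0\}\).

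\textbf{Step 1: pointwise comparison.} For each fixed \(a\), the associated law and its independent version share the marginals, each belief being wrong with probability \(1-a\). Theorem~\ref{thm:corr} then gives \(B^{\mathrm{as}}(a)\ge B^{\perp}(a)\) when \(\bar B\) is supermodular, and the reverse inequality when \(\bar B\) is submodular. No monotonicity in \(a\) is needed here, because the theorem is applied separately at each accuracy.

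\textbf{Step 2: the supermodular case.} Let \(S_{\perp}=\{a:B^{\perp}(a)<0\}\) and \(S_{\mathrm{as}}=\{a:B^{\mathrm{as}}(a)<0\}\). By Step 1, \(B^{\mathrm{as}}(a)<0\) implies \(B^{\perp}(a)<0\), so \(S_{\mathrm{as}}\subseteq S_{\perp}\). Taking suprema gives \(a^{*}_{\mathrm{as}}\le a^{*}_{\perp}\). The submodular case is the same argument with the roles of the two laws swapped.

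\textbf{Step 3: the deployment consequences.} When damage overlaps, take any \(a\in(a^{*}_{\mathrm{as}},a^{*}_{\perp}]\). Since \(a>\sup S_{\mathrm{as}}\), we have \(a\notin S_{\mathrm{as}}\), so \(B^{\mathrm{as}}(a)\ge0\) and the deployment does not hurt. The rule requires \(a>a^{*}_{\perp}\), so it rejects this deployment.

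When damage compounds, take \(a\in(a^{*}_{\perp},a^{*}_{\mathrm{as}})\). The rule deploys, and it remains to show \(B^{\mathrm{as}}(a)<0\).

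\textbf{Main obstacle.} A supremum alone does not put every \(a\) below it in the set, since \(S_{\mathrm{as}}\) need not be an interval. This is exactly where the hypothesis that \(B^{\mathrm{as}}\) is increasing enters. Because \(a<a^{*}_{\mathrm{as}}\), there is some \(a'\in S_{\mathrm{as}}\) with \(a'>a\). Monotonicity then gives \(B^{\mathrm{as}}(a)\le B^{\mathrm{as}}(a')<0\).

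Nothing analogous is needed in the overlap case, because being above a supremum already excludes membership in the set. This asymmetry explains why the statement imposes monotonicity only in the compounding case.
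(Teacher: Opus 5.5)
Your proof is correct and follows the paper's own argument: Theorem~\ref{thm:corr} applied pointwise in \(a\), inclusion of the sets where the benefit is negative, the definition of the supremum in the overlap case, and monotonicity of \(B^{\mathrm{as}}\) in the compounding case. Your choice of some \(a'\in\{a:B^{\mathrm{as}}(a)<0\}\) with \(a'>a\) simply spells out the step the paper states in one clause.
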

\begin{proof}
By Theorem~\ref{thm:corr}, \(B^{\mathrm{as}}\ge B^{\perp}\) pointwise in the
supermodular case, so \(\{a:B^{\mathrm{as}}(a)<0\}\subseteq\{a:B^{\perp}(a)<0\}\);
the submodular case reverses the inclusion. Above \(a^{*}_{\mathrm{as}}\),
\(B^{\mathrm{as}}\ge0\) by the definition of a supremum; below it,
\(B^{\mathrm{as}}<0\) when \(B^{\mathrm{as}}\) is increasing.
\end{proof}

The corollary gives the direction of the failure under association; where acting on
wrong beliefs can hurt, no rule based on accuracy avoids the failure itself. Write
\(\bar B(\varepsilon)=\sum_S\beta(S)\prod_{p\in S}\varepsilon_p\) (\(\beta(S)\): the interaction of
the beliefs in \(S\)); a module, any distribution of \(\varepsilon\), has expected benefit
\(\sum_S\beta(S)\,m(S)\), with \(m(S)\) the probability that every belief in \(S\) is wrong, and
its \(m(S)\) with \(|S|\le k\) are its \emph{order-\(k\) error rates} (order 1 is accuracy).

\begin{thm}[What a certificate must see]
\label{thm:order}
Let \(\bar B(\mathbf 1)<0<\bar B(\mathbf 0)\) (all beliefs wrong hurts, all right helps). A rule that knows \(\bar B\) and a module's order-\(k\) error
rates can act exactly when the module's expected benefit is positive, for every module,
iff \(\beta(S)=0\) whenever \(|S|>k\). Otherwise there are two modules with the same
order-\(k\) error rates, every belief of each right with the same probability, whose
expected benefits have opposite signs.
\end{thm}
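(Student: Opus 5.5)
The plan is to prove the two directions separately. For ``if'' I will show that the expected benefit is a function of the order-\(k\) error rates. For ``only if'' I will build two modules with identical order-\(k\) error rates whose expected benefits have opposite signs; since a rule sees the same inputs for both, it must err on one of them.

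\emph{If.} Suppose \(\beta(S)=0\) whenever \(|S|>k\). Then the expected benefit is \(\sum_{|S|\le k}\beta(S)\,m(S)\). A rule that knows \(\bar B\) knows the coefficients \(\beta\), since they are the M\"obius coefficients of \(\bar B\). It also knows every \(m(S)\) with \(|S|\le k\). It can therefore evaluate the expected benefit exactly and act precisely when it is positive.

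\emph{Only if, step 1: a top-order perturbation.} Fix \(S\) with \(|S|>k\) and \(\beta(S)\ne0\), and put \(\chi(\varepsilon)=\prod_{p\in S}(1-2\varepsilon_p)\). For \(\delta\in\{+1,-1\}\), define \(q_\delta\) as follows: coordinates outside \(S\) are always \(0\) (always right), and on \(S\)
\[
q_\delta(\varepsilon)=2^{-|S|}\bigl(1+\delta\,\chi(\varepsilon)\bigr).
\]
This is a probability law, because \(\chi=\pm1\) and \(\chi\) averages to zero under the uniform law.

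The error rates then behave as follows.
\begin{itemize}
\item For \(T\subsetneq S\), the marginal of \(q_\delta\) on \(T\) is uniform, since \(\chi\) averages to zero over any coordinate of \(S\setminus T\). Hence \(m(T)=2^{-|T|}\) under both laws.
\item For \(T\not\subseteq S\), \(m(T)=0\) under both laws.
\item For \(T=S\), only the all-wrong point contributes, so \(m(S)=2^{-|S|}\bigl(1+\delta(-1)^{|S|}\bigr)\).
\end{itemize}
Because \(|S|>k\), the two laws \(q_{+1},q_{-1}\) share all order-\(k\) error rates, and in particular every belief is wrong with the same probability (\(1/2\) on \(S\), \(0\) off it). Their expected benefits are \(c\pm d\), where \(c=\sum_{T\subsetneq S}\beta(T)2^{-|T|}\) and \(d=\pm\beta(S)2^{-|S|}\ne0\). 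The sign of \(d\) is irrelevant.

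\emph{Step 2: forcing opposite signs (the main obstacle).} The values \(c\pm d\) need not straddle zero, and this is where the hypothesis \(\bar B(\mathbf 1)<0<\bar B(\mathbf 0)\) enters. I will mix both laws with the same point mass \(R\):
\[
M_\pm=(1-t)\,q_{\pm1}+t\,R .
\]
Every \(m(T)\) is linear in the law, so \(M_+\) and \(M_-\) still share all order-\(k\) error rates and all marginals. Their expected benefits are \((1-t)(c\pm d)+t\,\bar B(R)\).
\begin{itemize}
\item If \(c\ge0\), take \(R=\delta_{\mathbf 1}\), so \(\bar B(R)<0\).
\item If \(c<0\), take \(R=\delta_{\mathbf 0}\), so \(\bar B(R)>0\).
\end{itemize}
The midpoint \((1-t)c+t\,\bar B(R)\) is continuous in \(t\) and has opposite signs (or is zero) at the endpoints \(t=0\) and \(t=1\). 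By the intermediate value theorem it vanishes at some \(t\in[0,1)\); we have \(t<1\) because \(\bar B(R)\ne0\). At that \(t\) the two benefits are \(\pm(1-t)d\), which are nonzero and of opposite sign.

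A rule that sees only \(\bar B\) and the order-\(k\) error rates receives identical inputs for \(M_+\) and \(M_-\). It therefore takes the same action on both and is wrong on one of them. This proves the ``otherwise'' clause and hence ``only if''. The points to check carefully are that the uniform marginals on proper subsets of \(S\) survive the mixing, which holds by linearity, and that \(t<1\).
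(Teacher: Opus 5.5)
Your \emph{if} direction is right, and so is the skeleton of your \emph{only if} direction. The gap is that your construction does not prove the last clause of the statement as the paper means it. ``Every belief of each right with the same probability'' says that inside each module all \(P\) beliefs share one accuracy. The appendix states the pair with \(r_k(\mu^\pm)=r_k(\nu_a)\), every belief right with probability \(a\), and the remarks stress ``even the same accuracy on every belief'': even a single scalar accuracy cannot certify. Your reading, that each belief is equally accurate in the two modules, is already implied by equal order-\(k\) rates, since \(k\ge1\). In your \(M_\pm\), a belief in \(S\) is wrong with probability \((1+t)/2\) or \((1-t)/2\) (mixing toward all-wrong or all-right), and a belief outside \(S\) with probability \(t\) or \(0\). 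So for \(S\ne[P]\) the accuracies differ across beliefs. There is also a slip in Step~1. The benefits of \(q_{\pm1}\) are \(c\pm d\) with \(c=\sum_{T\subseteq S}\beta(T)2^{-|T|}\), the term \(T=S\) included, and \(d=(-1)^{|S|}\beta(S)2^{-|S|}\). With your \(c\), the case split can pick the wrong point mass, and the benefits at the chosen \(t\) are not \(\pm(1-t)d\).

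Both are easy to repair, and the repaired argument is a more elementary route than the paper's. Start from the uniform law \(\nu_{1/2}\) on all \(2^P\) patterns and add \(\pm2^{-P}\delta_S\), where \(\delta_S(X)=(-1)^{|S|-|X|}\) for \(X\subseteq S\) and \(0\) otherwise. This moves \(m(S)\) alone, so both laws have every belief wrong with probability \(1/2\) and benefits \(\mathbb E_{\nu_{1/2}}\bar B\pm2^{-P}\beta(S)\). Your parity law on \(S\) with fair coins outside it also works, but only for \(S\) inclusion-maximal with \(\beta(S)\ne0\), since it moves \(m(T)\) for every \(T\supseteq S\). Mixing with the point mass at \(\mathbf 0\) or \(\mathbf 1\) shifts every belief's error rate equally. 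Your Step~2, whose midpoint is linear in \(t\) with root \(t=c/(c-\bar B(R))\), then yields the full statement.

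The paper applies the same perturbation to the independent law \(\nu_a\). It gets the sign change from continuity of the fibre's extreme benefits \(L(a),U(a)\) (concavity of an LP value function on the relative interior) and from union bounds near \(a=0,1\). What that buys is a pair with the order-\(k\) rates of an independent law, so it is indistinguishable through order \(k\) from independent errors. Point-mass mixing generally loses this for \(k\ge2\), because it pushes pairwise error rates above the products of marginals. You can get that form cheaply too. The map \(a\mapsto\mathbb E_{\nu_a}\bar B\) runs continuously from \(\bar B(\mathbf 1)<0\) to \(\bar B(\mathbf 0)>0\), so it vanishes at some \(a\in(0,1)\). At that \(a\), the laws \(\nu_a\pm\min_X\nu_a(X)\,\delta_S\) have benefits of opposite signs.
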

\begin{proof}[Proof sketch]
Sufficiency is the formula above. For necessity take \(|T|>k\) with \(\beta(T)\ne0\) and
independent errors at accuracy \(a\in(0,1)\); adding a small \(\eta(-1)^{|T|-|x|}\) to the
probability of each pattern \(x\subseteq T\) of wrong beliefs moves \(m(T)\) alone, since for every \(S\ne T\) the
contributions to \(m(S)\) cancel (M\"obius inversion), so modules with
these order-\(k\) rates have benefits filling some \([L(a),U(a)]\), \(L<U\). As optimal values of
linear programs whose right-hand sides, the order-\(k\) rates, lie in the relative
interior of the set of rate vectors some module attains, for every \(a\in(0,1)\), \(L\) and \(U\) are continuous in \(a\); a union bound gives \(U<0\) near \(a=0\) and \(L>0\) near \(a=1\), so
\(L(a)<0<U(a)\) just above the largest \(a\) with \(U(a)\le0\) (full proof in Appendix~\ref{app:thm2}).
\end{proof}

Accuracy (\(k=1\)) thus certifies every module only if no wrong beliefs interact.
Figure~\ref{fig:domains}c shows two such modules: with five lives, at accuracies between the two
thresholds (0.62 and 0.73), independent errors benefit and errors shared by every
observer harm.

What is specific to theory of mind is the source of the dependence: an inference made
once and shared publicly makes every observer's error about a target the same event,
the most dependent errors possible at that accuracy, and copies of one model approach
this as their remaining errors become blind spots (Proposition~\ref{prop:neff}).

\begin{prop}[Redundancy among observers]
\label{prop:neff}
Let \(n\ge2\) observers each infer a target \(t\), with errors independent given
the target. (a) If they share a target-level error probability \(p_t\) (samples of
one model given the same evidence, or readers drawn at random given the item) with \(0<\mathbb{E}\,p_t<1\), then
\(\Pr[\text{all }n\text{ wrong}]=\mathbb{E}[p_t^{\,n}]\ge(\mathbb{E}\,p_t)^n\), and
\(n_{\mathrm{eff}}=\ln\Pr[\text{all }n\text{ wrong}]/\ln\mathbb{E}\,p_t\) lies in
\([1,n]\), with \(n_{\mathrm{eff}}=n\) iff \(p_t\) is constant and
\(n_{\mathrm{eff}}=1\) iff \(p_t\in\{0,1\}\) almost surely. (b) If observer \(i\) has
its own \(p_{t,i}\) (different models),
\(\Pr[\text{all wrong}]=\mathbb{E}\prod_ip_{t,i}\le\prod_i(\mathbb{E}\,p_{t,i}^{\,n})^{1/n}\):
a mixed team fails together at most as often as the geometric mean of its members'
clone teams, with equality iff their error patterns \(p_{t,i}^{\,n}\) are proportional.
\end{prop}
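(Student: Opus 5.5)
The plan is to condition on the target and use the conditional independence of the observers. For (a), given \(t\) the \(n\) observers err independently with common probability \(p_t\), so \(\Pr[\text{all wrong}\mid t]=p_t^{\,n}\), and the tower property gives \(\Pr[\text{all }n\text{ wrong}]=\mathbb{E}[p_t^{\,n}]\). Jensen's inequality for the convex map \(x\mapsto x^n\) on \([0,1]\) gives \(\mathbb{E}[p_t^{\,n}]\ge(\mathbb{E}\,p_t)^n\). Because \(n\ge2\), the map is strictly convex, so equality holds iff \(p_t\) is almost surely constant.

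For the other side of (a), \(p_t\in[0,1]\) gives \(p_t^{\,n}\le p_t\), hence \(\mathbb{E}[p_t^{\,n}]\le\mathbb{E}\,p_t\). Equality holds iff \(p_t^{\,n}=p_t\) almost surely, that is, iff \(p_t\in\{0,1\}\) almost surely. Write \(q=\mathbb{E}\,p_t\in(0,1)\), so that \(\ln q<0\). The probability that all observers are wrong is at least \(q^n>0\), so its logarithm is finite. Dividing the chain \(n\ln q\le\ln\mathbb{E}[p_t^{\,n}]\le\ln q\) by the negative number \(\ln q\) reverses the inequalities and gives \(1\le n_{\mathrm{eff}}\le n\). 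The two equality cases carry over, because \(\ln\) is strictly increasing and dividing by \(\ln q\) is injective. The only care needed in this step is the sign flip, together with the fact that \(0<q<1\) makes every logarithm finite and nonzero.

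For (b), conditioning on \(t\) again gives \(\Pr[\text{all wrong}]=\mathbb{E}\prod_i p_{t,i}\). I would then apply the generalized Hölder inequality with \(n\) factors, each with exponent \(n\) (so \(\sum_i 1/n=1\)), to obtain \(\mathbb{E}\prod_i p_{t,i}\le\prod_i\|p_{t,i}\|_n=\prod_i(\mathbb{E}\,p_{t,i}^{\,n})^{1/n}\). The right-hand side is the geometric mean of the clone-team failure probabilities \(\mathbb{E}\,p_{t,i}^{\,n}\), which by (a) are exactly the rates at which teams of \(n\) copies of model \(i\) all fail.

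The step I expect to need the most care is the equality condition in Hölder. For nonnegative functions with exponents \(r_i=n\), equality holds iff the functions \(p_{t,i}^{\,r_i}=p_{t,i}^{\,n}\) are proportional almost surely, allowing any of them to vanish identically. I would either cite this standard characterization or prove it by induction on the number of factors, splitting off one factor at a time with the two-function Hölder inequality, whose equality case is the familiar proportionality condition. I would also check the degenerate cases. If some \(\mathbb{E}\,p_{t,i}^{\,n}=0\), then \(p_{t,i}=0\) almost surely and both sides are \(0\), so the proportionality convention must admit zero functions.
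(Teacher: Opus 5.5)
Your proof is correct and is the paper's own argument: total probability for the equality, Jensen for the lower bound (hence \(n_{\mathrm{eff}}\le n\)), \(p_t^{\,n}\le p_t\) for \(n_{\mathrm{eff}}\ge1\), and H\"older with \(n\) exponents equal to \(n\) for (b). You also make explicit several steps the paper leaves implicit: strict convexity for the equality case, the sign flip when dividing by \(\ln\mathbb{E}\,p_t<0\), and the convention that a factor vanishing almost surely counts as proportional in H\"older's equality case.
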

\begin{proof}
(a) Total probability gives the equality and Jensen's inequality the lower bound,
hence \(n_{\mathrm{eff}}\le n\); \(p^n\le p\) gives \(n_{\mathrm{eff}}\ge1\), with
equality iff \(p\in\{0,1\}\). (b) is H\"older's inequality with \(n\) exponents equal
to \(n\).
\end{proof}

\(n_{\mathrm{eff}}\) is the number of independent observers of the same accuracy
that would fail as rarely. Removing errors on targets the module is unsure about
lowers \(\mathbb{E}\,p_t\) but leaves \(\mathbb{E}\,p_t^{\,n}\) to its blind spots, so a
module whose remaining errors are decisive gains accuracy without gaining
redundancy, and \((\mathbb{E}\,p_t)^n\) understates its team's failure by a factor
that grows with accuracy.

\medskip\noindent\fbox{\parbox{\dimexpr\columnwidth-2\fboxsep-2\fboxrule\relax}{\footnotesize
\textbf{Assumptions.} Theorem~\ref{thm:corr} needs positively associated errors and
a benefit that is supermodular or submodular in which beliefs are wrong (Proposition~\ref{prop:units}
gives sufficient conditions; the language models' Hanabi errors about each teammate are associated, \S\ref{sec:llm}). Theorem~\ref{thm:order} needs only that all beliefs wrong hurts
and all right helps. The gate (\S\ref{sec:gate}) is the order-2 certificate: exact, with the module's own
costs and interactions, when no interaction of three or more beliefs is non-zero. Its own
statistics flag modules whose interacting beliefs are often wrong together (mean error load,
wrong beliefs per episode, above about four, where in our data it errs toward deferral), and
there direct measurement can take over (\S\ref{sec:gate}). Like the accuracy rule it replaces, the gate is an
estimate, not a bound; it needs a simulator that accepts injected beliefs, and its tables
assume that a module's errors are unrelated to the episode state that sets their cost.}}

\section{Domains, modules and protocol}
\label{sec:setup}
\paragraph{Task assignment.} Twelve workers each hold one of ten hidden rules (global, local, hermit and
cascading). Unassigned workers self-assign to a task of
their own rule's type, which gives a strong but suboptimal passive equilibrium;
welfare decays while average satisfaction is below a target, and collapse ends the
400-round episode. Each round the coordinator assigns one worker. Harm
\(h\in[0,1]\) scales the consequence of a structurally wrong assignment, from a
wasted round (\(h=0\)) to ejecting the worker for several rounds (\(h=1\)); neither
reference policy ever makes such an assignment.

\paragraph{Level-Based Foraging (LBF)} Jumanji's implementation
\cite{papoudakis2021benchmarking,bonnet2024jumanji}, unchanged (\(10\times10\) grid,
6 agents, 8 foods, levels 1--3, 50 steps). Agents run the benchmark's own
heuristic, which is the passive equilibrium; the coordinator forms coalitions from
believed levels; harm scales LBF's failed-load penalty.

\paragraph{Hanabi.} The Hanabi Learning Environment \cite{bard2020hanabi} with four
players and the standard deck, plus a hidden hint convention per player: under
dialect A, a colour hint means ``play the newest card it touches'' and a rank hint
``keep these cards''; dialect B is the reverse. A hinter must speak the receiver's
dialect, so there are \(W=12\) directed beliefs. The belief-free team gives only
hints that are safe in either dialect; it scores 11.1 to 13.2 points out of 25, the
informed team 18.3 to 20.0. Harm is the number of life tokens: a game
that loses all its lives scores zero.

\paragraph{Belief sources.} To trace \(B\) against accuracy we corrupt each belief
to a known accuracy, independently or shared (every observer holds the same belief
about a teammate). Real modules: in task assignment, classifiers trained on 10 to
20{,}000 snapshots of one passive round (accuracy 0.42--0.53); in LBF, a module that
infers levels by watching agents; in Hanabi, a Bayesian module that infers each
dialect from how players answer others' hints, publicly, so every observer shares
its errors. Language models: Claude Haiku~4.5, Sonnet~4.5, Sonnet~5.5 and
Opus~5.5 read the public transcript of the two opening rounds of 250 fresh Hanabi
games, with both conventions described, and return a convention for every player
(four samples per model and game at temperature 1; Sonnet~5.5 and Opus~5.5 with adaptive
thinking at default effort). A teammate is \emph{hinted} if it received a hint in the
transcript, the only evidence about its convention. A \emph{clone} team gives
observer \(i\) sample \(i\) of one model, a \emph{public} team gives every observer
the same sample, and a \emph{mixed} team gives each observer a different model.
GPT-5.4-mini ran with the same prompt and its default sampling. In task assignment, Haiku
4.5 and Sonnet 5.5 label all twelve workers from the rendered state after one
passive round (300 episodes); the mixed team takes alternate workers' labels from each.

\paragraph{Protocol.} Claims marked held in Table~\ref{tab:map} were preregistered before
their tests. Accuracy sweeps use 2{,}000 episodes per cell (the one-life test, 10{,}000). A module is compared with a
surrogate that has independent errors at its own realized accuracy (Hanabi) or its
own confusion matrix (task assignment), on the same episodes. Intervals are 95\%
cluster bootstraps over episodes; harm steps use paired bootstraps.

\begin{figure*}[t]
\centering
\includegraphics[width=\textwidth]{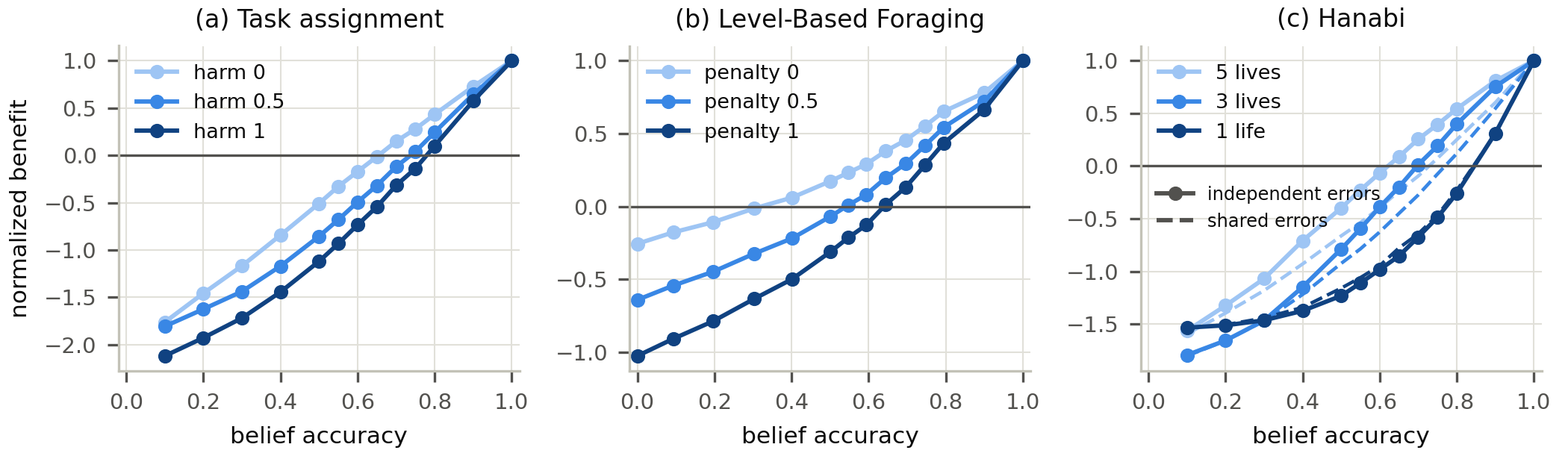}
\caption{The break-even accuracy rises with the cost of a wrong action. Normalized
benefit (0 = do nothing, 1 = informed) against belief accuracy; the threshold is where a
curve crosses zero, and even at the lowest cost a wrong action wastes a move. Thresholds: (a) 0.655 to 0.779 as harm
rises from 0 to 1; (b) 0.32 to 0.64; (c) 0.62 to 0.85 from five lives to one with
independent errors, 0.73 to 0.84 with errors shared by every observer (dashed).}
\label{fig:domains}
\end{figure*}

\section{Results}
Three results carry the paper: the threshold moves with the cost of a wrong action
(\S\ref{sec:cost}); accuracy errs in both directions, too lenient where shared errors
compound and too strict where costly errors are rare (\S\ref{sec:real}, \S\ref{sec:llm}); and accuracy buys far less
redundancy than independence predicts (\S\ref{sec:llm}, \S\ref{sec:human}). A certificate that prices the deployment rather than the module accounts for all three (\S\ref{sec:gate}).
\subsection{Cost sets the threshold}
\label{sec:cost}
Proposition~\ref{prop:mono} predicts that the break-even accuracy moves with the cost of a
wrong action and stays put for errors that cannot reach it; all three domains show the
first, and the two where such errors can be isolated show the second. In task assignment the threshold rises from 0.655 at \(h=0\) to 0.720, 0.737, 0.760
and 0.779 at \(h=0.25\) to 1 (Figure~\ref{fig:domains}a); every adjacent step is
resolved by a paired bootstrap, and Eq.~\eqref{eq:meanfield} with one free parameter
per harm explains at least 99.8\% of the variance of each curve above accuracy 0.5,
its fitted thresholds within 0.01 of the measured ones. In LBF the threshold slides from 0.32 to
0.64 as the penalty rises from 0 to 1. LBF also shows the channel half of
Proposition~\ref{prop:mono}: an under-estimated level makes the planner over-staff a
coalition, which never fails, so under-estimates never reach the penalty and their
threshold is 0.697 at every harm, while over-estimates have no knee at low harm and
become the dangerous error only above \(h\approx1.3\), as a second preregistered test
confirmed (all six out-of-range thresholds within 0.021 of their predicted values). In Hanabi the threshold rises from 0.621 with five
lives to 0.845 with one. The channel half holds in task assignment as well, under
three preregistered predictions that all held: errors that swap a worker's rule for
another of the same task type never trigger the harm, and their threshold is 0.261
at every harm, the five curves identical to the last bit, while errors that cross
types slide from 0.693 to 0.819. A knee needs a passive equilibrium a wrong action can
damage: with workers inert without a coordinator there is none, and in a \(2^4\) factorial
the Shapley value \cite{shapley1953} assigns 91\% of the knee's depth to
self-organization. The threshold can also be found
without a sweep: replaying the informed coordinator on the same episodes with a few
injected errors (ThresholdProbe) predicts it within 0.03 in every task-assignment
case, within 0.05 in 30 of 31 LBF cases and within 0.01 in Hanabi at two to five
lives.

\subsection{Real modules: accuracy errs in both directions}
\label{sec:real}
\paragraph{Task assignment: costly errors are rare.} The classifiers' accuracy (0.42--0.53)
is below the independent-error threshold at every harm, so an accuracy rule never
deploys them. Yet acting on their beliefs beats doing nothing in 33 of 35 (module,
harm) cells (32 with intervals excluding zero), by up to \(+0.76\) where independent errors at the same accuracy would
score \(-0.40\) to \(-1.00\). In the two modules we examined, errors are coherent: workers of one
class that a module gets wrong share the same wrong label in 49--54\% of pairs, against 33--46\%
for draws from its confusion matrix, and a surrogate with independent errors from the same confusion matrix
under-predicts the benefit by 0.08--0.19, mostly through which wrong labels go
together. Accuracy errs here mainly through cost: under 0.4\% of the errors violate a
task type, so few reach the costly channel, and only the deployment's costs, which the
gate's single-error tables measure (\S\ref{sec:gate}), tell which errors do. The
coherent wrong labels, outside Theorem~\ref{thm:corr}, add to the benefit.

\paragraph{Hanabi: damage compounds.} With corrupted beliefs at the same
accuracy, sharing each teammate's belief across all observers costs 0.17--0.35 of
the achievable benefit with two or more lives (16 of 16 cells, every interval
excluding zero) and raises the threshold by 0.04--0.11 (Figure~\ref{fig:domains}c).
The cost is lost redundancy: when everyone shares a misreading, no teammate can
save the target's critical cards, and 28--36\% more of them are lost. With one life
a second channel opens (shared errors misread fewer distinct teammates, so fewer
games end on a fatal misplay) and the two cancel (net \(+0.009\), CI \([-0.014,+0.030]\)).

\paragraph{Hanabi: a public module.} The Bayesian module infers in public, so every
observer shares its mistakes. It behaves like the shared-error surrogate (within 0.05 in 27 of 30 cells at
development accuracy, preregistered, and 28 at its realized accuracy), loses 0.07--0.31 to independent errors at its realized accuracy with two
or more lives (every interval excluding zero; the preregistered test, at development
accuracy, held), and accuracy approves two of its
deployments, which lose 0.23 to the belief-free team.

\begin{table*}[t]
\centering
\small
\caption{Language models as ToM modules: copies of one model fail together far more
often than independence predicts, and accuracy errs in both directions. \textbf{Hanabi}
(250 games, four samples per model): accuracy; probability that all three observers of a
teammate are wrong, and its independent prediction (error cubed); \(n_{\mathrm{eff}}\)
(Proposition~\ref{prop:neff}; 3 if independent); \(\times\)model, joint error over an
item-response model of teammate difficulty and model accuracy (\(^{\dagger}\)above its
95\% range, 0.90--1.12); benefit minus that of independent errors at the same accuracy,
five lives (\(^{*}\)CI excludes zero); accuracy-rule false approvals, one to five
lives (benefit negative, CI below zero). \textbf{Task assignment} (300 episodes, one
sample per model): coherence, look-alike workers' same-wrong-label rate over its
independent rate; benefit at \(h=0/0.5/1\), and minus that of independent errors from the
team's confusion matrix; beneficial deployments accuracy rejects (thresholds 0.655,
0.737, 0.779).}
\label{tab:llm}
\resizebox{\textwidth}{!}{%
\begin{tabular}{lccccccc}
\toprule
\multicolumn{8}{l}{\textbf{Hanabi}: shared misreadings remove redundancy (damage compounds); first value all teammates, second hinted teammates}\\
team & acc. & all three wrong & independent & \(n_{\mathrm{eff}}\) & \(\times\)model & vs.\ independent, 5 lives & false approvals \\
\midrule
GPT-5.4-mini clones & 0.631 / 0.676 & 0.178 / 0.158 & 0.050 / 0.034 & 1.73 / 1.64 & 0.92 & \(-0.15^{*}\) & 1 \\
Haiku 4.5 clones & 0.677 / 0.740 & 0.157 / 0.126 & 0.034 / 0.018 & 1.64 / 1.54 & 0.94 & \(-0.17^{*}\) & 0 \\
Sonnet 4.5 clones & 0.722 / 0.805 & 0.164 / 0.117 & 0.022 / 0.007 & 1.41 / 1.31 & \(1.19^{\dagger}\) & \(-0.19^{*}\) & 0 \\
Sonnet 5.5 clones & 0.787 / 0.875 & 0.139 / 0.090 & 0.010 / 0.002 & 1.28 / 1.16 & \(1.37^{\dagger}\) & \(-0.17^{*}\) & 1 \\
Opus 5.5 clones & 0.794 / 0.885 & 0.162 / 0.090 & 0.009 / 0.002 & 1.15 / 1.11 & \(1.66^{\dagger}\) & \(-0.19^{*}\) & 0 \\
mixed, four Claude models & 0.745 / 0.826 & 0.122 / 0.075 & 0.017 / 0.005 & 1.54 / 1.48 & 1.02 & \(-0.08\) & 0 \\
mixed, three Claude + GPT & 0.727 / 0.796 & 0.116 / 0.075 & 0.020 / 0.009 & 1.66 / 1.63 & 0.92 & \(-0.08\) & 0 \\
public (one sample for all), five models & 0.63--0.79 & & & 1 & & \(-0.20^{*}\) to \(-0.36^{*}\) & 3 \\
\midrule
\multicolumn{8}{l}{\textbf{Task assignment}: errors rarely reach the costly channel, and coherent wrong labels add to the benefit}\\
team & acc. & coherence [CI] & & & \(B\) at \(h=0/0.5/1\) & vs.\ independent, \(h=0/0.5/1\) & false rejections \\
\midrule
Haiku 4.5 & 0.330 & 1.82 [1.65, 1.99] & & & \(+0.18^{*}/+0.06/-0.13\) & \(+0.14^{*}/+0.18^{*}/+0.21^{*}\) & 1 \\
Sonnet 5.5 & 0.496 & 1.02 [0.87, 1.16] & & & \(+0.32^{*}/+0.27^{*}/+0.22^{*}\) & \(-0.06/-0.07/-0.06\) & 3 \\
mixed & 0.419 & 1.38 [1.22, 1.55] & & & \(+0.26^{*}/+0.13/-0.02\) & \(+0.07/+0.07/+0.07\) & 1 \\
\bottomrule
\end{tabular}}
\end{table*}

\begin{figure*}[t]
\centering
\includegraphics[width=\textwidth]{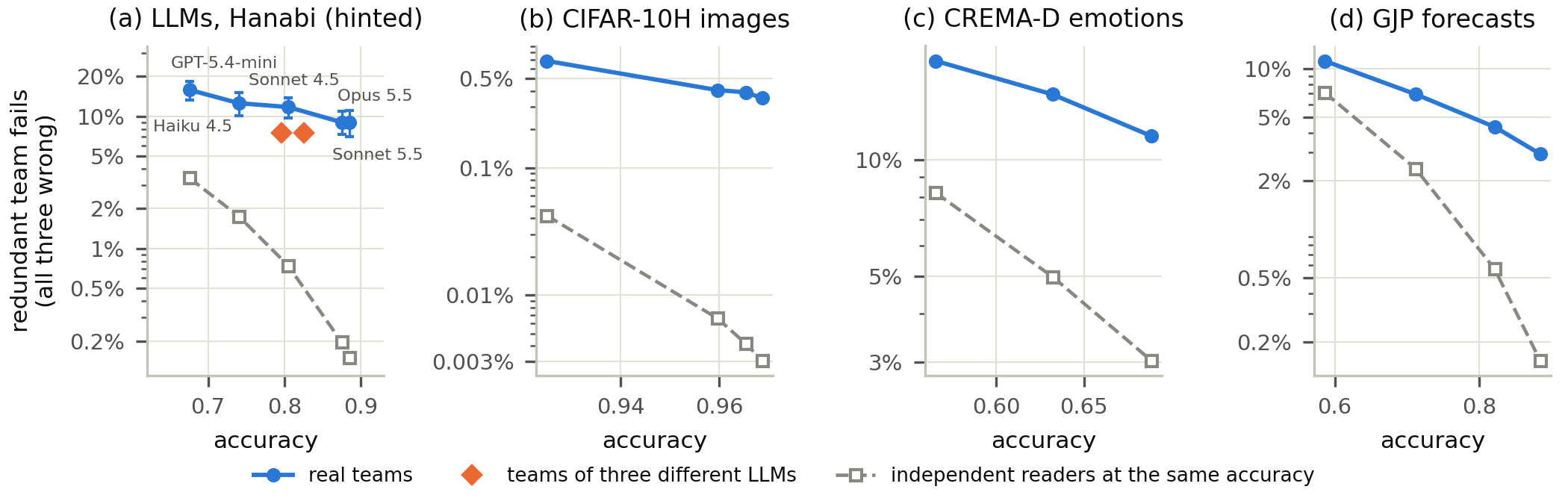}
\caption{Accurate readers still fail together: how often a redundant team of
three readers fails (all three wrong), against independent readers of the same
accuracy. (a) Language-model teams in Hanabi, teammates the models had evidence
about; bars are 95\% CIs over games. Diamonds: teams of three different models. (b--d) People:
points are reader terciles (and the top decile where enough readers rate each item),
ranked on a random half of the items and scored on the other half. Across each panel's accuracy
range, independence predicts that failure falls 2.7 to 46 times; it falls 1.6 to 3.8
times.}
\label{fig:redundancy}
\end{figure*}

\subsection{Language models: both directions, and far less redundancy than independence predicts}
\label{sec:llm}
\paragraph{Shared errors.} For every model, the three observers of a teammate are
all wrong together far more often than independence allows: 4.7 to 18.6 times for
the Claude models (preregistered, held) and 3.5 times for GPT-5.4-mini. The errors about
each teammate are associated (positive pairwise covariances; multivariate totally positive,
MTP\(_2\), joint tables), the
premise of Theorem~\ref{thm:corr}. Most of the association is shared difficulty: an
item-response model in which only a teammate's difficulty and a model's accuracy set each
error probability \cite{jo2026subjectivity,rasch1960probabilistic} already predicts joint
errors 3.8 to 11.2 times the independent rate. The rest is specific to the model and grows with accuracy (\(\times\)model in
Table~\ref{tab:llm}: no excess for Haiku and GPT-5.4-mini, 1.66 for Opus~5.5; robust to
leaving one model out and to two-parameter nulls), a within-model counterpart of the
finding that more accurate models agree more on their errors
\cite{kim2025correlated,goel2025great}. Mixing models removes this layer but not the
shared difficulty.

\paragraph{Far less redundancy than independence predicts.} On teammates the
models had evidence about, per-belief error falls from 32\% (GPT-5.4-mini) to 11\% (Opus~5.5), but the
probability that all three observers misread such a teammate falls only from 16\% to 9\%
(difference 6.8 points, CI 4.4 to 9.2), where independence predicts 3.4\% to 0.15\%:
three observers are worth 1.64 independent ones for GPT-5.4-mini and 1.11 for Opus~5.5
(Figure~\ref{fig:redundancy}a). The mechanism is Proposition~\ref{prop:neff}'s: when one
sample misreads a hinted teammate, the other three do too, 41\% of the time for
GPT-5.4-mini and 78\% for Opus~5.5. Over all teammates joint failure barely falls (0.178
and 0.162): on the quarter of teammates who received no hint, most samples answer
``A'', the convention listed first, a default every copy shares. Sharing a transcript, as deployed copies do, accounts for only part of this: four
different Claude models reading the same transcript, less accurate than Opus clones, misread a hinted teammate
jointly less often (7.5\% against 9.0\%), well under Proposition~\ref{prop:neff}(b)'s bound
of 10.3\%.

\paragraph{Consequences in both directions.} Hanabi is the compounding case. With
two or more lives every Claude clone team does worse than independent errors at its
own accuracy (12 of 12 cells, \(-0.01\) to \(-0.26\); six intervals exclude zero), and
accuracy approves five deployments whose benefit is negative with interval excluding
zero: Sonnet~5.5 clones with two lives and the Haiku and Sonnet~4.5 public teams with
five and three lives (preregistered, held) and, marginally (accuracy
inside the threshold's interval), GPT-5.4-mini's clone and public teams with five
lives. Task assignment is the cheap-error case. Every team is below
the accuracy threshold at every harm, yet Sonnet~5.5 beats not acting at all three
harms and Haiku and the mixed team do at \(h=0\): five false rejections and no false
approval (preregistered, held). Few of the models' errors violate a task type (9\% for
Haiku, 1\% for Sonnet~5.5), so they rarely reach the costly channel of
Proposition~\ref{prop:mono}, and Haiku's wrong labels are coherent: it beats independent
errors from its own confusion matrix by 0.14 to 0.21 at every harm, a gain from which
wrong labels it picks together, not from which workers it gets wrong (their pairwise
error correlation is 0.01).

\paragraph{Robustness.} Two preregistered replications probe the shared errors. With the
two conventions' definitions exchanged under their labels (5{,}000 fresh calls), the
default mostly follows the label, yet three copies are still worth only 1.08 to 1.51 observers on
hinted teammates (held), and accuracy approves four more harmful deployments. Giving half of
each team's copies the swapped labels breaks the shared default, yet those teams still
do worse than independent errors in 12 of 12 cells (three significantly). At maximum
reasoning effort (preregistered; Opus~5.5, 125 games),
Opus~5.5 spends 13 times as many tokens per answer and is no more accurate on hinted
teammates (0.869 at both efforts), its three copies are worth 1.10 observers against
1.11 (held), and only the shared default softens (``A'' in 69\% of answers on unhinted teammates against 89\%).

\subsection{Human readers}
\label{sec:human}
Human errors show the same structure (Figure~\ref{fig:redundancy}b--d).
We use CIFAR-10H \cite{peterson2019human} (2{,}571 people labelling 10{,}000 images),
CREMA-D \cite{cao2014crema} (2{,}442 people judging the emotion portrayed in 7{,}442
clips) and the Good Judgment Project \cite{mellers2014psychological} (6{,}608
forecasters, 497 resolved questions). Three readers of an item are all wrong 39, 2.7
and 2.7 times as often as independent readers of the same accuracies would be, and
with readers ranked on half the items and scored on the other half, redundant failure
falls roughly in proportion to a single reader's error as accuracy rises, not with its
cube, while series teams succeed 1.8--2.6, 11--14 and 5--9 points more often than
independence predicts. Accuracy errs in both directions here too: the most accurate tenth of
forecasters fails as a redundant trio on 2.95\% of questions where independence
predicts 0.15\%, so a failure cap between the two approves it; the top third is all
right on 64.5\% of questions where independence predicts 55.5\%, so a success
requirement between the two rejects it. The single-belief knee appears in people too: Diplomacy players judging
whether a message is a lie \cite{peskov2020takes} are right on 91.4\% of messages, below the
94.9\% of trusting every one, as near-chance lie detection \cite{bond2006accuracy} predicts; acting on suspected lies
pays only if a believed lie costs more than six false accusations.

The forecasting panel also separates a shared method from a shared information channel:
against an additive model of question difficulty and forecaster accuracy, forecasters
given the same randomly assigned training share no errors beyond those given different
training (pairs 1.00, CI \([0.996,1.006]\)), but forecasters on the same team, who see each
other's forecasts, are wrong together 1.15 times (pairs, CI 1.11--1.20) and 1.21 times
(triples, CI 1.11--1.41) as often as forecasters from different teams of the same year
(permutation \(p<0.004\); 1{,}894 forecasters in 281 teams). Team membership may not have been random, so this is an
association; it parallels the public module of
\S\ref{sec:real}: those who share an information channel share errors, those who share
only a method do not.

\begin{table}[t]
\centering
\small
\caption{The Blind-Spot Gate (E, \(\lambda=1\)) makes no significant false approval in any split,
like an A/B test with a 95\% bound but without deploying the module; accuracy (A, current
practice) makes at least one in every split. Decisions on the 57
language-model cells (48 Hanabi, 9 task assignment), estimated on a random half of each
cell's episodes and scored on the other half; means over 20 random splits (in
parentheses, s.d.\ of mean regret over splits). FA/FR: false approvals/rejections by the
held-out estimate; FA\(^{*}\)/FR\(^{*}\): where its interval excludes zero; regret:
\(\max(B,0)\) minus the value obtained; last column: splits with any FA\(^{*}\).
Rule C (dependence without interactions) equals B. The first four rows use
only injection replays and error statistics; the last two deploy the module on the
estimation half, as a paired A/B test without and with a confidence bound.}
\label{tab:gate}
\resizebox{\columnwidth}{!}{%
\begin{tabular}{lcccccccc}
\toprule
 & & & & & \multicolumn{3}{c}{mean regret} & \\
\cmidrule(lr){6-8}
rule & FA & FR & FA\(^{*}\) & FR\(^{*}\) & all & Hanabi & task & FA\(^{*}\)\,splits \\
\midrule
A: accuracy (independent threshold) & 9.5 & 7.5 & 3.2 & 5.0 & 0.056\,{\scriptsize(0.013)} & 0.033 & 0.178 & 20 \\
B: single-error costs & 15.9 & 4.2 & 8.6 & 1.7 & 0.085\,{\scriptsize(0.028)} & 0.087 & 0.076 & 20 \\
D: B + pairwise interactions & 3.8 & 5.1 & 1.5 & 1.3 & 0.023\,{\scriptsize(0.011)} & 0.016 & 0.061 & 13 \\
E: D + measured dependence (\(\lambda=1\)) & 0.7 & 9.3 & 0.0 & 2.0 & 0.021\,{\scriptsize(0.008)} & 0.014 & 0.061 & 0 \\
\midrule
direct: benefit on the estimation half & 2.1 & 2.7 & 0.3 & 0.5 & 0.008\,{\scriptsize(0.005)} & 0.007 & 0.014 & 4 \\
safety test: its 95\% lower bound \(>0\) & 0.4 & 9.1 & 0.0 & 1.8 & 0.021\,{\scriptsize(0.010)} & 0.016 & 0.048 & 0 \\
\bottomrule
\end{tabular}}
\end{table}

\begin{figure}[t]
\centering
\includegraphics[width=\columnwidth]{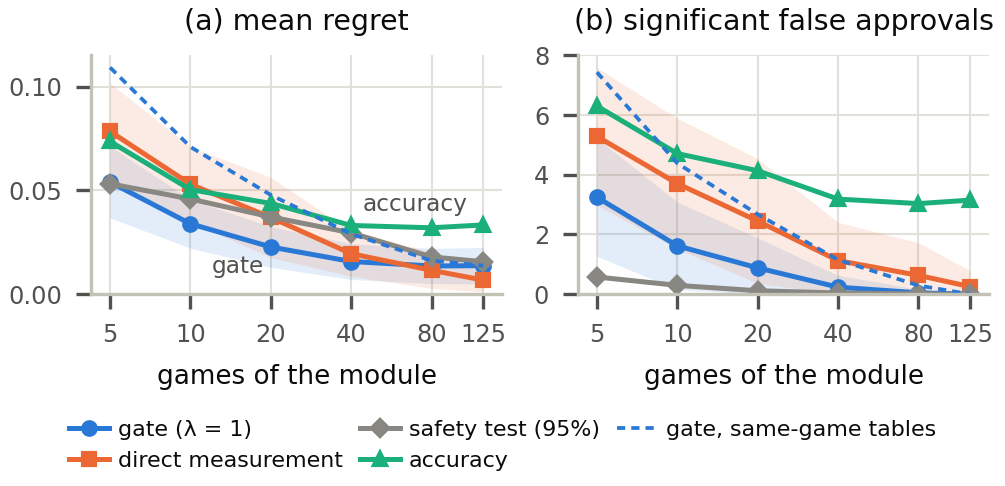}
\caption{With few games of the module the gate beats direct measurement, which catches
up at about 40 games and leads from 80 (Hanabi, 48 language-model cells).
Error statistics, the direct estimate and the safety test (Table~\ref{tab:gate}) use \(n\)
games of the estimation half; the
gate's injection tables, which need no module, the whole half (dashed: the same \(n\)
games). Means over 20 splits \(\times\) 10 draws, scored as in Table~\ref{tab:gate};
shading, \(\pm1\) s.d.\ over splits (gate, direct).}
\label{fig:fewgames}
\end{figure}

\subsection{The Blind-Spot Gate: certifying with costs, interactions and dependence}
\label{sec:gate}
A better certificate prices what accuracy throws away: truncating the expansion of
\S\ref{sec:corr} at second order gives the Blind-Spot Gate,
\[
\hat B=b_0+\sum_u q_u c_u+\sum_{u<v}\bigl(q_uq_v+\lambda\Delta_{uv}\bigr)I_{uv},
\qquad\text{act iff }\hat B>0,
\]
with \(c_u\) the cost of belief \(u\) alone wrong and \(I_{uv}\) the pairwise
interaction, both from paired single- and pair-injection replays as in
ThresholdProbe; \(q_u\) the module's error rate on belief \(u\); and
\(\Delta_{uv}=\Pr[\text{both wrong}]-q_uq_v\) its measured dependence, weighted by
\(\lambda\). At \(\lambda=1\), with the module's own costs and interactions, this is the
order-2 certificate of Theorem~\ref{thm:order}, and dependence enters only through the
interactions, which is Theorem~\ref{thm:corr} at second order. It never
deploys the module: it needs injection replays (in Hanabi independent of the
module) and the module's error statistics, so it can certify a module before deployment
wherever the coordinator can be simulated with chosen wrong beliefs.

\paragraph{As safe as an A/B test, without deploying the module.} Table~\ref{tab:gate}
scores an ablation ladder over 20 random splits of each cell's episodes, against current
practice (accuracy) and against measuring the deployment on the estimation half by a
paired A/B test. Accuracy is unsafe: it makes a significant false approval in every split,
3.2 on average. Single-error costs are worse, because a shared misreading's cost lies in the
pairs. Interactions cut significant false approvals to 1.5 and regret to 0.023 (accuracy:
0.056), and the measured dependence removes them in all 20 splits (regret 0.021), at the
price of more deferrals. That matches the safety and regret of the safety test in
Table~\ref{tab:gate}, an A/B test that lets the team act on the module's beliefs and approves
only if the one-sided 95\% lower confidence bound of high-confidence policy improvement and
quasi-Seldonian algorithms \cite{thomas2015hcpi,thomas2019preventing} is positive. Without
the bound, the A/B test has lower regret (0.008) but makes a significant false approval in
4 of 20 splits.

\paragraph{Out of sample.} Two preregistered tests (Appendix~\ref{app:oos}) apply the gate, unchanged, to
modules it was not developed on, on the same 20-split protocol. The seven task-assignment
classifiers of \S\ref{sec:real} (21 cells) are the strict case: accuracy, 0.43 to 0.54 on these episodes,
rejects every cell, while the gate approves 19 of 21 on average, among them in every split
the 18 cells of the six larger classifiers, all significantly beneficial, defers the one
harmful cell in every split, makes no significant false approval and matches direct measurement (regret
0.004 against accuracy's 0.479). The Bayesian module of \S\ref{sec:real} (24 cells, one to
five lives, one to eight rounds watched) is the lenient case: accuracy makes a significant
false approval in 17 of 20 splits, including both deployments \S\ref{sec:real} found harmful,
which the gate never approves. The gate's regret is 0.009 against 0.029, lower in 18 of 20 splits,
and its one significant false approval, in a single split, falls on a cell at the accuracy bar
where direct measurement errs the same way. The dependence term matters where the theory
predicts (exploratory): it never changes a decision on the classifiers, but without it (rule D)
the gate makes a significant false approval on the Bayesian module in 5 of 20 splits.

\paragraph{With few games of the module.} Measuring the deployment needs games of the
module in the team; the gate needs only its error statistics. Given \(n\) games in Hanabi
(Figure~\ref{fig:fewgames}), the gate has lower regret than measuring those games directly
up to 20 games (0.034 against 0.053 at 10, in at least 17 of 20 splits) and fewer
significant false approvals at every \(n\); the two are level at 40 games, and direct
measurement leads from 80. Against the safety test, the gate has lower regret from 10 to 80
games, and from 10 to 40 at a matched rate of significant false approvals (supplement).
The advantage comes from the module-free injection tables: estimated from the same \(n\)
games alone, the gate falls behind direct measurement (dashed in Figure~\ref{fig:fewgames}).

\paragraph{Scope.} The gate is second order: it drops
\(\sum_{|S|\ge3}\beta(S)\,m(S)\), which by Theorem~\ref{thm:order} can matter when three
or more interacting beliefs are often wrong together. Its own statistics, the error load,
show when a module is in that regime: handing over to direct measurement when the mean
load exceeds four errors cuts its regret from 0.021 to 0.014, at the cost of one
significant false approval in one of the 20 splits. Haiku in task assignment is the case in
point: it makes about eight errors per episode in coherent blocks, whose benefit (\(+0.18\) at \(h=0\), Table~\ref{tab:llm})
the pairwise model, accurate up to about four injected errors, cannot see, so the gate
defers it. Adding the measured within-target third-order terms (all three beliefs about
one teammate wrong) did not improve the gate's predictions at 250 games (preregistered;
Appendix~\ref{app:oos}), so the second-order gate stands.

Deferral targets what redundancy cannot fix: in Hanabi a Bayes act-or-defer rule that
withholds convention-dependent hints from teammates the module has not yet read is
beneficial in all 30 fresh-seed cells (minimum \(+0.30\); preregistered) and beats acting on
every belief in each, by at least \(+0.09\), including after a single round of watching
(\(+0.30\) against \(-0.18\) to \(-0.74\)).

\section{Discussion}
\paragraph{Certify the deployment, not the module.} What decides a deployment is what
each kind of wrong action costs there, whether the damage errors do overlaps or
compounds, and how often errors co-occur (Theorem~\ref{thm:order}); accuracy summarizes
none of them. In
practice: estimate what one wrong belief costs by paired injection, read the sign of
interactions from the task's structure (Proposition~\ref{prop:units}) or injected pairs,
measure how errors co-occur, and act on the measured deployment, with a confidence bound, when
the module can be run at length, otherwise through the gate, deferring on targets with no
evidence.

\paragraph{Redundancy is not a by-product of accuracy.} In our tasks it has to be bought
separately: diverse members or prompts raise measured redundancy (\S\ref{sec:llm}), and
shared difficulty needs more evidence or deferral (common-cause failure \cite{esary1967}
and monoculture \cite{kleinberg2021algorithmic} in ToM form). The sign of shared errors depends on the task: shared
errors that make redundant teams fragile can make teams whose errors overlap more robust
than accuracy suggests (Theorem~\ref{thm:corr}).

\paragraph{Where this applies.} Wherever several agents act on inferred beliefs and a wrong
action has a cost. Committees of language models that audit transactions, review contracts or route
cases can vote among copies of one model to catch errors; in Hanabi three copies
were worth 1.1 to 1.6 independent observers of a hinted teammate (Table~\ref{tab:llm}), and
mixing models lowered joint failure. Robot fleets and warehouse coordinators that act on
inferred intentions face the cost dial of \S\ref{sec:cost}, and there the threshold and the
gate could say when to act and when to fall back on a safe passive policy. Where a
wrong action is irreversible, as in dispatch, paired injection measures the accuracy an
agent needs before it acts rather than defers to a human. In every such setting the question
is not how accurate the module is but what its errors cost together.

\paragraph{Limitations and future work.} Our domains are controlled benchmarks with
known hidden variables, which is what lets us inject errors of known kinds; the
human data, whose hidden states nobody defined, show the same knee and the same two directions. The language-model tests use one prompt
family (relabelled once), and most human team analyses combine judgments made independently; richer prompting and teams
that deliberate are natural next tests. The gate is a second-order certificate: extending
it to modules that make many coherent errors at once is future work, and there direct
measurement applies. Across classifiers, a Bayesian module, five language models and three
sets of human judgments, whether accuracy certifies too strictly or too leniently follows
from how a task combines wrong beliefs, and can often be predicted before deployment.

\paragraph{Data availability.} The supplementary material is available as an ancillary file
with this preprint (\texttt{supplement.pdf}). It gives full methods and results, every
preregistered prediction with its outcome, the proofs and additional analyses, and its last
appendix states how AI tools were used in this research. The preregistration files, code and
result files behind each table and figure are available from the authors on request; the
third-party datasets used in the human-reader analyses are not redistributed (their sources
are listed).

\bibliographystyle{ACM-Reference-Format}
\bibliography{refs}

\appendix
\section{Proof of Theorem~\ref{thm:order}}
\label{app:thm2}
\paragraph{Setting.} There are \(P\ge2\) beliefs, each wrong value fixed as in \S\ref{sec:corr}. An error pattern is the set \(X\subseteq[P]=\{1,\dots,P\}\) of wrong beliefs, and a
module is any probability law \(\mu\) on the \(2^P\) patterns. \(\bar B(X)\) is the expected
benefit of acting given pattern \(X\), so \(\bar B(\emptyset)\) is acting with every belief
right and \(\bar B([P])\) with every belief wrong. The M\"obius coefficients
\(\beta(S)=\sum_{X\subseteq S}(-1)^{|S|-|X|}\bar B(X)\) give \(\bar B(X)=\sum_{S\subseteq X}\beta(S)\),
which is the expansion \(\bar B(\varepsilon)=\sum_S\beta(S)\prod_{p\in S}\varepsilon_p\) of
\S\ref{sec:corr}. With \(m_\mu(S)=\mu(\{X:X\supseteq S\})\), the probability that every belief in
\(S\) is wrong,
\begin{equation}
\mathbb E_\mu\bar B=\sum_X\mu(X)\sum_{S\subseteq X}\beta(S)=\sum_S\beta(S)\,m_\mu(S).
\label{eq:moebius-mean}
\end{equation}
The order-\(k\) error rates of \(\mu\) are \(r_k(\mu)=(m_\mu(S))_{1\le|S|\le k}\); order 1 gives
each belief's error rate. A rule is any map from \(r_k(\mu)\) to \{act, defer\} (it may
depend on \(\bar B\)); it decides correctly for \(\mu\) if it acts exactly when
\(\mathbb E_\mu\bar B>0\).

\paragraph{Statement.} Let \(\bar B([P])<0<\bar B(\emptyset)\) and \(k\ge1\). (i) If \(\beta(S)=0\) for every \(|S|>k\),
the rule ``act iff \(\sum_{|S|\le k}\beta(S)m(S)>0\)'' (with \(m(\emptyset)=1\)) decides correctly
for every module. (ii) If \(\beta(T)\ne0\) for some \(|T|>k\), there are \(a\in(0,1)\) and two
modules \(\mu^-,\mu^+\) with \(r_k(\mu^-)=r_k(\mu^+)=r_k(\nu_a)\), where \(\nu_a\) has independent
errors with every belief right with probability \(a\), and
\(\mathbb E_{\mu^-}\bar B<0<\mathbb E_{\mu^+}\bar B\). No rule decides correctly for both.

\begin{proof}
(i) By \eqref{eq:moebius-mean}, \(\mathbb E_\mu\bar B=\sum_{|S|\le k}\beta(S)m_\mu(S)\), a function
of \(r_k(\mu)\).

(ii) \emph{Step 1: the fibre.} For \(a\in(0,1)\) let \(F(a)=\{\mu:r_k(\mu)=r_k(\nu_a)\}\), the
laws with the order-\(k\) rates of \(\nu_a\). It is a nonempty (\(\nu_a\in F(a)\)) compact convex
polytope, and \(\mu\mapsto\mathbb E_\mu\bar B\) is linear, so the benefits of the modules in
\(F(a)\) form a closed interval \([L(a),U(a)]\) whose ends are attained.

\emph{Step 2: \(L(a)<U(a)\).} Let \(\delta_T(X)=(-1)^{|T|-|X|}\) for \(X\subseteq T\) and
\(\delta_T(X)=0\) otherwise. For any \(S\),
\[\textstyle\sum_{X\supseteq S}\delta_T(X)=\sum_{S\subseteq X\subseteq T}(-1)^{|T|-|X|}=(1-1)^{|T|-|S|},\]
which is \(1\)
if \(S=T\) and \(0\) if \(S\subsetneq T\), and the sum is empty if \(S\not\subseteq T\); so
adding \(\eta\,\delta_T\) to a law changes \(m(T)\) by \(\eta\) and no other \(m(S)\), including
the total mass \(m(\emptyset)\). Every pattern has positive probability under \(\nu_a\)
(\(\nu_a(X)=(1-a)^{|X|}a^{P-|X|}\)), so for \(|\eta|\le\eta_0=\min_X\nu_a(X)\) the law
\(\nu_a+\eta\delta_T\) is a probability law, and it lies in \(F(a)\) because \(|T|>k\). By
\eqref{eq:moebius-mean} its benefit is \(\mathbb E_{\nu_a}\bar B+\eta\beta(T)\); taking
\(\eta=\pm\eta_0\) gives two benefits \(2\eta_0|\beta(T)|>0\) apart, so \(L(a)<U(a)\).

\emph{Step 3: continuity.} Let \(D\) be the set of rate vectors \(r_k(\mu)\) of all laws
\(\mu\), a convex polytope (the image of the simplex under a linear map), and
\(V(r)=\max\{\mathbb E_\mu\bar B:r_k(\mu)=r\}\) for \(r\in D\). \(V\) is finite and concave on
\(D\): if \(\mu_1,\mu_2\) attain \(V(r_1),V(r_2)\), the mixture
\(\theta\mu_1+(1-\theta)\mu_2\) has rates \(\theta r_1+(1-\theta)r_2\) and benefit
\(\theta V(r_1)+(1-\theta)V(r_2)\), so \(V(\theta r_1+(1-\theta)r_2)\ge\theta V(r_1)+(1-\theta)V(r_2)\).
A finite concave function is continuous on the relative
interior of its domain \cite[Theorem~10.1]{rockafellar1970convex}, and
\(r_k(\nu_a)\) lies in the relative interior of \(D\) because \(\nu_a\) lies in the relative
interior of the simplex and a linear map sends relative interiors onto relative interiors
\cite[Theorem~6.6]{rockafellar1970convex}. As \(a\mapsto r_k(\nu_a)\) is continuous (its entries are
\((1-a)^{|S|}\)), \(U(a)=V(r_k(\nu_a))\) is continuous on \((0,1)\), and so is \(L(a)\), the same
argument applied to \(-\bar B\).

\emph{Step 4: the ends.} Every \(\mu\in F(a)\) has each belief wrong with probability
\(1-a\) (order-1 rates, \(k\ge1\)), so \(\mu(X\ne[P])\le\sum_p\Pr_\mu[p\text{ right}]=Pa\) and
\(\mathbb E_\mu\bar B\le\bar B([P])+Pa\,(\max\bar B-\bar B([P]))\). Hence \(U(a)<0\) for
\(0<a<a_0=-\bar B([P])/\bigl(P(\max\bar B-\bar B([P]))\bigr)\). Likewise
\(\mu(X\ne\emptyset)\le P(1-a)\) and \(\mathbb E_\mu\bar B\ge\bar B(\emptyset)-P(1-a)(\bar
B(\emptyset)-\min\bar B)\), so \(L(a)>0\) for \(a>1-a_1\),
\(a_1=\bar B(\emptyset)/\bigl(P(\bar B(\emptyset)-\min\bar B)\bigr)\).

\emph{Step 5: a sign change inside one fibre.} Let \(a^\ast=\sup\{a\in(0,1):U(a)\le0\}\). The
set contains \((0,a_0)\) and no point of \((1-a_1,1)\), where \(U\ge L>0\), so
\(a^\ast\in(0,1)\); by continuity \(U(a^\ast)\le0\), and \(U(a)>0\) for every \(a>a^\ast\). Since
\(L(a^\ast)<U(a^\ast)\le0\) and \(L\) is continuous, \(L(a)<0\) on some \([a^\ast,a^\ast+\epsilon)\);
for \(a\) in \((a^\ast,a^\ast+\epsilon)\), \(L(a)<0<U(a)\). Take \(\mu^-\) and \(\mu^+\) attaining
\(L(a)\) and \(U(a)\). A rule sees the same rates for both and so gives them the same
decision, which is wrong for one of them.
\end{proof}

\paragraph{Remarks.}
Step 2 uses the fact that \(k\)-wise marginals do not fix the mean of terms of degree
above \(k\) \cite{odonnell2014analysis}; Steps 3 to 5 turn it into a
sign change between modules with the same order-\(k\) rates, here even the same accuracy on
every belief, whenever acting can hurt. The theorem guarantees such a pair exists, not that the range of accuracies where
it exists is wide; Figure~\ref{fig:domains}c shows a wide one in Hanabi. For \(k=1\): accuracy (each belief's error rate, even if every belief has the same one) can
certify every module only if \(\beta(S)=0\) for all \(|S|\ge2\), that is, only if no wrong
beliefs interact. The gate of \S\ref{sec:gate} at \(\lambda=1\), with the module's own costs and
interactions, computes \(\sum_{|S|\le2}\beta(S)m(S)\) from injection replays and the module's
error statistics, so it is the \(k=2\) case: exact when interactions stop at pairs. Figure~\ref{fig:domains}c exhibits a pair of modules of the kind in (ii): with five lives, at
accuracies between the independent-error threshold (0.621, interval 0.610 to 0.633) and the
shared-error threshold (0.728, interval 0.716 to 0.741), independent errors benefit and
errors shared by every observer harm (at accuracy 0.65: \(+0.09\) against \(-0.24\); at 0.70:
\(+0.26\) against \(-0.09\)). The gate's exactness also needs its tables to be the module's
own, that is, errors whose effect does not depend on the episode they occur in and wrong
values distributed as in the replays (coherent wrong labels in task assignment are not).
As a numerical check, \texttt{theory/thm2\_check.py} (supplement) computes \(L(a)\) and \(U(a)\) by linear
programming on a grid of accuracies for 120 random tasks with 2 to 5 beliefs and \(k\) from 1
to \(P\); in every task the two ends coincide at every accuracy when the M\"obius degree is at
most \(k\), and otherwise an accuracy with \(L(a)<0<U(a)\) exists.

\section{Out-of-sample certification}
\label{app:oos}
Both tests apply the gate of \S\ref{sec:gate} unchanged (\(\lambda=1\), class-independent injection
tables, no free parameter) to modules it was not developed on, with the predictions written
before any replay; rules, splits and scoring are those of Table~\ref{tab:gate}.

\paragraph{Trained classifiers.} The seven task-assignment classifiers of \S\ref{sec:real}, retrained
with the published recipe (their held-out accuracies reproduce the archived values), label the 300
episodes of the gate table; injection tables come from single- and pair-injection replays with wrong
labels drawn from each classifier's own confusion row (21 cells). Both predictions held: no
significant false approval in any of 20 splits (OOS1), and regret below accuracy's in all 20 (OOS2;
at least 15 predicted).

\begin{table}[h]
\centering\small
\caption{Classifiers, 21 cells, means over 20 random splits (in parentheses, s.d.\ of mean regret over
splits). Accuracy (0.43 to 0.54) is below every threshold and rejects every cell.
\(^{\dagger}\)Exploratory, computed after the preregistered test was scored; D and E decide
identically on every cell.}
\label{tab:arxiv-oos}
\resizebox{\columnwidth}{!}{%
\begin{tabular}{lccccccc}
\toprule
rule & FA & FR & FA\(^{*}\) & FR\(^{*}\) & regret & approved & FA\(^{*}\) splits \\
\midrule
A: accuracy (independent threshold) & 0.0 & 19.7 & 0.0 & 17.8 & 0.479\,{\scriptsize(0.038)} & 0.0 & 0 \\
B: single-error costs\(^{\dagger}\) & 0.0 & 2.7 & 0.0 & 1.0 & 0.019\,{\scriptsize(0.016)} & 17.0 & 0 \\
D: B + pairwise interactions\(^{\dagger}\) & 0.3 & 0.8 & 0.0 & 0.2 & 0.004\,{\scriptsize(0.005)} & 19.2 & 0 \\
E: the gate (\(\lambda=1\)) & 0.3 & 0.8 & 0.0 & 0.2 & 0.004\,{\scriptsize(0.005)} & 19.2 & 0 \\
direct: benefit on the estimation half & 0.4 & 0.7 & 0.0 & 0.2 & 0.004\,{\scriptsize(0.008)} & 19.4 & 0 \\
\bottomrule
\end{tabular}}
\end{table}

\paragraph{Bayesian module.} The public module of \S\ref{sec:real}, run on the 250 gate games at one,
two, three and five lives after one to eight rounds of watching (24 cells), with injection tables
replayed at the same watching time. Accuracy made at least one significant false approval in 17 of 20
splits (OOSH2; at least 10 predicted) and the gate's regret was lower in 18 of 20 (OOSH3; at least 15
predicted). The gate made one significant false approval, in one split, at one life after three rounds,
where the module's accuracy (0.851) sits at the bar (0.845) and direct measurement made the same one;
the prediction of none in any split (OOSH1) was therefore not met. Without the dependence
term (rule D) the gate made a significant false approval in 5 of the 20 splits, and with
single-error costs alone (rule B) in all 20; on the classifiers, whose errors are nearly
independent, D and E decide identically.

\begin{table}[h]
\centering\small
\caption{Bayesian module, 24 cells, means over 20 random splits (in parentheses, s.d.\ of mean
regret over splits; exact halves rounded up). E\(_K\) hands over to direct
measurement at a mean error load above four; the last row adds the measured within-target
third-order terms. \(^{\dagger}\)Exploratory, computed after the preregistered test was scored.}
\label{tab:arxiv-oosh}
\resizebox{\columnwidth}{!}{%
\begin{tabular}{lccccccc}
\toprule
rule & FA & FR & FA\(^{*}\) & FR\(^{*}\) & regret & approved & FA\(^{*}\) splits \\
\midrule
A: accuracy (fixed bar per life setting) & 4.7 & 0.2 & 1.10 & 0.00 & 0.029\,{\scriptsize(0.013)} & 19.5 & 17 \\
B: single-error costs\(^{\dagger}\) & 4.6 & 0.6 & 2.10 & 0.05 & 0.053\,{\scriptsize(0.020)} & 19.0 & 20 \\
D: B + pairwise interactions\(^{\dagger}\) & 2.1 & 0.6 & 0.25 & 0.00 & 0.010\,{\scriptsize(0.010)} & 16.6 & 5 \\
E: the gate (\(\lambda=1\)) & 1.1 & 1.4 & 0.05 & 0.20 & 0.009\,{\scriptsize(0.008)} & 14.7 & 1 \\
E\(_K\): direct above load 4 & 1.1 & 1.4 & 0.05 & 0.20 & 0.009\,{\scriptsize(0.008)} & 14.7 & 1 \\
direct: estimation-half benefit & 2.0 & 1.3 & 0.05 & 0.05 & 0.009\,{\scriptsize(0.006)} & 15.7 & 1 \\
E + within-target third order & 1.0 & 1.4 & 0.05 & 0.20 & 0.009\,{\scriptsize(0.010)} & 14.6 & 1 \\
\bottomrule
\end{tabular}}
\end{table}

\paragraph{A third-order correction.} Theorem~\ref{thm:order} says the order-2 gate is exact only when
no interaction of three or more beliefs is non-zero. In Hanabi the natural candidate is the three
beliefs about one teammate, whose interaction \(\beta_3(j)\) we measured with block injections (all
three wrong) and added as \(\sum_j\beta_3(j)\,m_3(j)\). The preregistered prediction that this would
lower the prediction error on the 48 language-model cells in at least 15 of 20 splits failed (6 of 20;
mean absolute error 0.308 against 0.292): at 250 games the measured \(\beta_3\) are small and noisy
(\(|\beta_3|\approx0.05\)), so the correction adds estimation noise and the second-order gate stands.

\end{document}